\theoremstyle{plain}
\newtheorem{theorem}{Theorem}[section]
\theoremstyle{definition}
\newtheorem{definition}[theorem]{Definition}
\theoremstyle{remark}
\title{Derivation of Back-propagation for Graph Convolutional Networks using Matrix Calculus and its Application to Explainable Artificial Intelligence}
\author{%
  Yen-Che Hsiao \\
  Department of Electrical and Computer Engineering\\
  University of Connecticut\\
  Storrs CT 06269, USA \\
  \texttt{yen-che.hsiao@uconn.edu} \\
  \And
  Rongting Yue \\
  Department of Electrical and Computer Engineering\\
  University of Connecticut\\
  Storrs CT 06269, USA \\
  \AND
  Abhishek Dutta \\
  Department of Electrical and Computer Engineering\\
  University of Connecticut\\
  Storrs CT 06269, USA \\
}
\begin{document}

\maketitle

\begin{abstract}
  This paper provides a comprehensive and detailed derivation of the backpropagation algorithm for graph convolutional neural networks using matrix calculus. The derivation is extended to include arbitrary element-wise activation functions and an arbitrary number of layers. The study addresses two fundamental problems, namely node classification and link prediction. To validate our method, we compare it with reverse-mode automatic differentiation. The experimental results demonstrate that the median sum of squared errors of the updated weight matrices, when comparing our method to the approach using reverse-mode automatic differentiation, falls within the range of \(10^{-18}\) to \(10^{-14}\). These outcomes are obtained from conducting experiments on a five-layer graph convolutional network, applied to a node classification problem on Zachary's karate club social network and a link prediction problem on a drug-drug interaction network. Finally, we show how the derived closed-form solution can facilitate the development of explainable AI and sensitivity analysis.
\end{abstract}

\section{Introduction}

Graph neural network (GNN) is a neural network model for processing data represented in graph domains, encompassing cyclic, directed, and undirected graphs \cite{scarselli2008graph}. Graph convolutional network (GCN) is a type of GNN model that employs layer-wise propagation rule, operating directly on graphs \cite{kipf2016semi}. A GCN layer comprises message passing over nodes/edges, succeeded by an aggregation/pooling strategy and a fully connected layer \cite{duttadeep}.

Node classification is one of the most common research directions in graph analysis, where the objective of the task is to predict a specific class for each unlabeled node in a graph using graph information \cite{xiao2022graph}. J. Zhang \emph{et al.} \cite{zhang2023graph} applied GCN for distribution system anomaly detection, categorizing nodes into normal data, data anomalies, or event anomalies, and implemented their method on the IEEE 37-node distribution systems. Link prediction seeks to infer unobserved/missing links or predict future ones based on the connections within currently observed partial networks \cite{wang2021benchmarking}. R. Yue \emph{et al.} \cite{yue2023repurposing} applied a GCN that utilizes protein-protein interactions, drug-protein interactions, and drug-target interactions to predict potential drug molecules capable of binding with disease-related proteins.

Back-propagation (BP) was proposed by David Rumelhart, Geoffrey Hinton, and Ronald Williams as a learning procedure for training neural networks utilizing a set of input-output training samples \cite{rumelhart1986learning, THEODORIDIS2020901}. 
BP makes use of chain rule to compute the gradient of the network's error with respect to every single model parameter, allowing for the adjustment of weight and bias terms via gradient descent to reduce the error \cite{geron2022hands, werbos1988backpropagation}. Reverse mode automatic differentiation is the primary technique in the form of the back-propagation algorithm for training neural networks due to its computational efficiency, particularly for an objective functions with a large number of inputs and a scalar output \cite{baydin2018automatic}.

M. A. Nielsen \cite{nielsen2015neural} derived the BP for multi-layer perceptrons (MLPs) in a matrix-based form using the Hadamard product for numerical efficiency. N. M. Mishachev \cite{mishachev2017backpropagation} utilized Hadamard product and Kronecker product to derive an explicit matrix version of the BP equations for MLPs and reduce the number of the indices in the equations. M. Naumov \cite{naumov2017feedforward} demonstrated that the gradient of the weights can be expressed as a rank-1 and rank-t matrix for MLPs and recurrent neural networks at time step t, respectively. Y. Cheng \cite{cheng2021derivation} derived the BP algorithm for MLPs based on derivative amplification coefficients. 

Explainable Artificial Intelligence (XAI) focuses on developing AI systems that not only make accurate predictions but also provide clear, interpretable explanations for their actions and decisions, thus increasing their trustworthiness for human users \cite{hassija2024interpreting}. In XAI, sensitivity analysis is crucial because it reveals which parameters, or groups of parameters, have the most significant influence on the predictions made by machine learning models \cite{van2022comparison}.

In this work, we used matrix calculus to derive an analytic and exact closed-form solution of the gradient of the loss function with respect to weight matrices for the training of GCN. The derivation considers the problems of binary node classification and link prediction. We first considered the problem of node classification with three-layer GCN and we extended the GCN model to a multi-layer GCN with \(d\) layers and applied the procedure for a link prediction problem. To validate our approach, we applied our weight gradient calculation for node classification on a Zachary’s Karate Club graph \cite{zachary1977information} and for link prediction using a self-prepared 100-node drug-drug interaction network. Subsequently, we compared the updated weight matrix using our method to the method employing reverse mode automatic differentiation for gradient computation. Our results demonstrate a ignorable difference between our weight matrix and the weight matrix updated using the gradient from reverse mode automatic differentiation, which verifies the correctness of our method. As there is no existing work deriving the BP of GCN in matrix form, we believe it is important for the machine learning community to have this closed-form solution available. In addition, we applied the same procedure to determine the sensitivity of the loss or output with respect to the input feature matrix in GCN for XAI. The definition of notations and some properties used in this paper are provided in Appendix \ref{Notations}.

\section{Back-propagation of Graph Convolutional Network}


A basic graph structure is defined as:
\begin{equation}
G = (V,E),
\end{equation}
where \(|V|=n\) is the number of nodes in the graph and \(|E|=n_{e}\) is the number of edges. Denoting \(v_i \in V\) as a node and \(e_{ij} = (v_i, v_j) \in E\) as an edge pointing from \(v_i\) to \(v_j\), the adjacency matrix, \(\mathbf{A} \in {\{0,1\}}^{n \times n}\), is an \(n \times n\) matrix where \(a_{ij}\) equals \(1\) if the edge \(e_{ij}\) exists, and \(a_{ij}\) equals \(0\) if \(e_{ij}\) does not belong to \(E\); in addition, a graph may possess node attributes represented by the matrix \(\mathbf{H}_{0} \in \mathbb{R}^{n \times n_0}\), where \(\mathbf{h_{0_v}} \in \mathbb{R}^{n_0}\) is the feature vector of a node \(v\) with \(n_0\) features \cite{wu2020comprehensive}.


\subsection{Binary classification of nodes}

\subsubsection{Backpropagation for 3-layer GCN  with ReLU and sigmoid activation function}

We consider a 3-layer GCN defined as
\begin{equation}
\mathbf{H}_{1} = \sigma_{ReLU}(\mathbf{A}\mathbf{H}_{0}\mathbf{W}_{1}),
\label{H1def3GCNnode}
\end{equation}
\begin{equation}
\mathbf{H}_{2} = \sigma_{ReLU}(\mathbf{A}\mathbf{H}_{1}\mathbf{W}_{2}),
\label{H2def3GCNnode}
\end{equation}
\begin{equation}
\mathbf{H}_{3} = \sigma_{ReLU}(\mathbf{A}\mathbf{H}_{2}\mathbf{W}_{3}),
\label{H2def3GCNnode}
\end{equation}
\begin{equation}
\mathbf{\hat{Y}} = \sigma_{sigmoid}(\mathbf{H}_{3}),
\end{equation}
where \(\mathbf{A} \in {\{0,1\}}^{n \times n}\) is an \(n \times n\) adjacency matrix, \(\mathbf{H}_{0} \in \mathbb{R}^{n \times n_{0}}\) is the feature matrix  for \(n\) nodes with \(n_{0}\) features, \(\mathbf{\hat{Y}} \in \mathbb{R}^{n \times n_{3}}\) is the output matrix, \(n_{3}=1\) for binary node classification, and \(\mathbf{W}_{1} \in \mathbb{R}^{n_{0} \times n_{1}}\), \(\mathbf{W}_{2} \in \mathbb{R}^{n_{1} \times n_{2}}\), and \(\mathbf{W}_{3} \in \mathbb{R}^{n_{2} \times n_{3}}\) are trainable parameter matrices, \(n_{1}\), \(n_{2}\), and \(n_{3}\) are the number of features in \(\mathbf{H}_{1}\in \mathbb{R}^{n \times n_{1}}\), \(\mathbf{H}_{2}\in \mathbb{R}^{n \times n_{2}}\), and \(\mathbf{H}_{3}\in \mathbb{R}^{n \times n_{3}}\), respectively, \(\sigma_{ReLU}\) denotes an element-wise rectified linear unit (ReLU) function \(\sigma_{ReLU}(v)=max(v,0)\) \cite{nair2010rectified}, \(\sigma_{sigmoid}\) denotes an element-wise sigmoid function \(\sigma_{sigmoid}(v)=\frac{1}{1+e^{-v}}\)..

The loss function of the 3-layer GCN for binary node classification can be defined as
\begin{equation}
L = L_1 + L_2,
\label{nodeloss-2}
\end{equation}
where 
\begin{equation}
L_1 = -\sum_{i=1}^{n} \sum_{j=1}^{n_{3}} (y_{ij}ln(\hat{y}_{ij})),
\label{nodeloss-2-L1}
\end{equation}
\begin{equation}
L_2 = -\sum_{i=1}^{n} \sum_{j=1}^{n_{3}} ((1-y_{ij})ln(1-\hat{y}_{ij})),
\label{nodeloss-2-L2}
\end{equation}
where \(y_{ij}\) denotes the element in the \(i\)th row and \(j\)th column of the ground truth of nodes \(\mathbf{Y} \in \mathbb{R}^{n \times n_{3}}\), \(\hat{y}_{ij}\) denotes the element in the \(i\)th row and \(j\)th column of \(\mathbf{\hat{Y}}\).

To derive the derivative of the loss function \(L\) with respect to the third weight matrix \(\mathbf{W}_{3}\), we first write 
\begin{equation}
\frac{\partial L}{\partial \mathbf{W}_{3}}=\frac{\partial L_1}{\partial \mathbf{W}_{3}}+\frac{\partial L_2}{\partial \mathbf{W}_{3}}
\label{Dnodeloss}.
\end{equation}
Then, the derivative of \(L_1\) in (\ref{nodeloss-2-L1}) with respect to \(\mathbf{W}_{3}\) can be written as
\begin{equation}
\begin{aligned}
\frac{\partial L_1}{\partial \mathbf{W}_{3}} &= -\sum_{i=1}^{n} \sum_{j=1}^{n_{3}} y_{ij}\cdot \frac{\partial ln(\hat{y}_{ij})}{\partial \mathbf{W}_{3}}\\ 
&= -\sum_{i=1}^{n} \sum_{j=1}^{n_{3}} y_{ij}\cdot \frac{\partial ln(\sigma_{sigmoid}(h_{3ij}))}{\partial \mathbf{W}_{3}}\\ 
&= -\sum_{i=1}^{n} \sum_{j=1}^{n_{3}} y_{ij}\cdot \frac{1}{\hat{y}_{ij}} \cdot \hat{y}_{ij} \cdot(1-\hat{y}_{ij})\cdot \frac{\partial h_{3ij}}{\partial \mathbf{W}_{3}}\\ 
&= -\sum_{i=1}^{n} \sum_{j=1}^{n_{3}} y_{ij}\cdot(1-\hat{y}_{ij})\cdot \frac{\partial h_{3ij}}{\partial \mathbf{W}_{3}} .
\label{2-1-1-E1}
\end{aligned}
\end{equation}
Similarly, the derivative of \(L_2\) in (\ref{nodeloss-2-L2}) with respect to \(\mathbf{W}_{3}\) can be written as
\begin{equation}
\begin{aligned}
\frac{\partial L_2}{\partial \mathbf{W}_{3}} &= -\sum_{i=1}^{n} \sum_{j=1}^{n_{3}} (1-y_{ij})\cdot \frac{\partial ln(1-\hat{y}_{ij})}{\partial \mathbf{W}_{3}}\\ 
&= \sum_{i=1}^{n} \sum_{j=1}^{n_{3}} (1-y_{ij})\cdot\hat{y}_{ij}\cdot \frac{\partial h_{3ij}}{\partial \mathbf{W}_{3}} .
\label{2-1-1-E2}
\end{aligned}
\end{equation}

The last term, \(\frac{\partial h_{3ij}}{\partial \mathbf{W}_{3}}\), in (\ref{2-1-1-E1}) and (\ref{2-1-1-E2}) can be written as \(\frac{\partial h_{3ij}}{\partial \mathbf{W}_{3}}=\frac{\partial \sigma_{ReLU}(\mathbf{A}_{i*}\mathbf{H}_{2}\mathbf{W}_{3_{*j}})}{\partial \mathbf{W}_{3}}\) using (\ref{H2def3GCNnode}), where \(\mathbf{A}_{i*}\) is the \(i\)th row of \(\mathbf{A}\) and \(\mathbf{W}_{3_{*j}}\) is the \(j\)th column of \(\mathbf{W}_{3}\). The expression, \(\frac{\partial \sigma_{ReLU}(\mathbf{A}_{i*}\mathbf{H}_{2}\mathbf{W}_{3_{*j}})}{\partial \mathbf{W}_{3}}\), is solved using the following definitions and theorem.

\begin{definition}
\label{def:elemact}
An element-wise activation function \(\mathbf{\Sigma}: \mathbb{R}^{p \times q} \rightarrow \mathbb{R}^{p \times q}\) is defined as a multivariate matrix-valued function \cite{ostwald2021induction}:
\begin{equation}
\mathbf{\Sigma}: \mathbf{X} \mapsto \mathbf{\Sigma}(\mathbf{X}) \coloneqq \begin{bmatrix}
\sigma(x_{11}) & \cdots & \sigma(x_{1q})\\
\vdots & & \vdots \\
\sigma(x_{p1}) & \cdots & \sigma(x_{pq})
\end{bmatrix},
\end{equation}
where
\(\mathbf{X} \in \mathbb{R}^{p \times q}\) and \(\sigma: \mathbb{R} \rightarrow \mathbb{R}, x_{ij} \mapsto \sigma(x_{ij})\) for all \(i=1,2,...,p\) and \(j=1,2,...,q\).
\end{definition}

\begin{definition}
\label{def:derivelemact}
The derivative of an element-wise activation function \(\mathbf{\Sigma}(\mathbf{X})\) \((p\times q)\) with respect to a matrix \(\mathbf{X} \in \mathbb{R}^{p \times q}\) is written as:
\begin{equation}
\begin{aligned}
\mathbf{\Sigma}'(\mathbf{X}) &= 
\frac{\partial \mathbf{\Sigma}(\mathbf{X})}{\partial \mathbf{X}} =\begin{bmatrix}
\sigma'(x_{11}) & \cdots & \sigma'(x_{1q})\\
\vdots & & \vdots \\
\sigma'(x_{p1}) & \cdots & \sigma'(x_{pq})
\end{bmatrix}\\
\end{aligned},
\end{equation}
where \(\sigma'(v)=\lim_{u\to0} \frac{\sigma(v+u)-\sigma(v)}{u}\) is defined as the derivative of a real-valued function \(\sigma\).
\end{definition}

\begin{theorem}
\label{thm:arbact}
Let \(\mathbf{F}: \mathbb{R}^{p \times q} \rightarrow \mathbb{R}^{m \times n}\) be a \(m\times n\) multivariate matrix-valued function of a \(p\times q\) matrix \(\mathbf{W}\in\mathbb{R}^{p \times q}\), the derivative of \(\mathbf{\Sigma}(\mathbf{F}(\mathbf{W}))\) with respect to \(\mathbf{W}\) can be written as:
\begin{equation}
\frac{\partial \mathbf{\Sigma}(\mathbf{F}(\mathbf{W}))}{\partial \mathbf{W}} = 
(\mathbf{J}_{p \times q} \otimes \mathbf{\Sigma}'(\mathbf{F}(\mathbf{W}))) \odot \frac{\partial \mathbf{F}(\mathbf{W})}{\partial \mathbf{W}},
\label{ArbitraryActivationMatrixDerivativeFinal}
\end{equation}
where \(\otimes\) is the Kronecker product in (\ref{Kronecker}), \(\odot\) is the Hadamard product in (\ref{Hadamard}), and \(J_{p \times q}\in\mathbb{R}^{p \times q}\) is an all 1 matrix.
\end{theorem}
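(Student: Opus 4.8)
The plan is to prove the identity entrywise, reducing the matrix statement to a scalar chain rule. First I would fix the matrix-by-matrix derivative convention adopted in the paper (collected in the notation appendix): the derivative \(\frac{\partial \mathbf{F}(\mathbf{W})}{\partial \mathbf{W}}\) of the \(m\times n\) matrix \(\mathbf{F}(\mathbf{W})\) with respect to the \(p\times q\) matrix \(\mathbf{W}\) is the \((pm)\times(qn)\) block matrix whose \((k,l)\) block is the \(m\times n\) matrix \(\frac{\partial \mathbf{F}(\mathbf{W})}{\partial w_{kl}}\), so that the \((a,b)\) entry of that block equals the scalar \(\frac{\partial f_{ab}}{\partial w_{kl}}\), where \(f_{ab}\) is the \((a,b)\) entry of \(\mathbf{F}(\mathbf{W})\), \(k\in\{1,\dots,p\}\), \(l\in\{1,\dots,q\}\), \(a\in\{1,\dots,m\}\), \(b\in\{1,\dots,n\}\). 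Pinning down this indexing is the step that makes the rest of the argument purely mechanical.

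Next I would compute the left-hand side in the same block form. Writing \(g_{ab}=\sigma(f_{ab})\) for the \((a,b)\) entry of \(\mathbf{\Sigma}(\mathbf{F}(\mathbf{W}))\) as in Definition \ref{def:elemact}, the \((a,b)\) entry of the \((k,l)\) block of \(\frac{\partial \mathbf{\Sigma}(\mathbf{F}(\mathbf{W}))}{\partial \mathbf{W}}\) is \(\frac{\partial \sigma(f_{ab})}{\partial w_{kl}}\). Since \(\sigma\) is a scalar function and \(w_{kl}\) enters only through \(f_{ab}\), the scalar chain rule gives \(\frac{\partial \sigma(f_{ab})}{\partial w_{kl}}=\sigma'(f_{ab})\,\frac{\partial f_{ab}}{\partial w_{kl}}\), where \(\sigma'\) is the real-valued derivative of Definition \ref{def:derivelemact}.

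Then I would expand the right-hand side in the same indexing. Because every entry of \(\mathbf{J}_{p\times q}\) equals \(1\), the Kronecker product \(\mathbf{J}_{p\times q}\otimes \mathbf{\Sigma}'(\mathbf{F}(\mathbf{W}))\) is the \((pm)\times(qn)\) block matrix each of whose \((k,l)\) blocks equals \(\mathbf{\Sigma}'(\mathbf{F}(\mathbf{W}))\); hence the \((a,b)\) entry of every block is \(\sigma'(f_{ab})\) by Definition \ref{def:derivelemact}. Taking the Hadamard product with \(\frac{\partial \mathbf{F}(\mathbf{W})}{\partial \mathbf{W}}\) is entrywise multiplication, so the \((a,b)\) entry of the \((k,l)\) block of the right-hand side is exactly \(\sigma'(f_{ab})\,\frac{\partial f_{ab}}{\partial w_{kl}}\). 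Comparing with the left-hand side entry-by-entry, and noting both objects have identical \((pm)\times(qn)\) shape, establishes the identity.

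The conceptual content is just the chain rule; the one thing requiring care is the bookkeeping that aligns the indices. In particular, the role of \(\mathbf{J}_{p\times q}\) is purely to broadcast the \(m\times n\) matrix \(\mathbf{\Sigma}'(\mathbf{F}(\mathbf{W}))\) into a \((pm)\times(qn)\) object that is conformable with \(\frac{\partial \mathbf{F}(\mathbf{W})}{\partial \mathbf{W}}\) for the Hadamard product, replicating \(\sigma'(f_{ab})\) into the block that multiplies \(\frac{\partial f_{ab}}{\partial w_{kl}}\). I expect the main (and only minor) obstacle to be stating this block/Kronecker index alignment precisely enough that the entrywise match is unambiguous, rather than any genuine analytic difficulty.
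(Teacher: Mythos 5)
Your proposal is correct and takes essentially the same route as the paper's proof in Appendix \ref{ProofTheorem}: both arguments expand \(\frac{\partial \mathbf{\Sigma}(\mathbf{F}(\mathbf{W}))}{\partial \mathbf{W}}\) in the block form of (\ref{DerMatrixFunction}), apply the scalar chain rule \(\frac{\partial \sigma(f_{ab})}{\partial w_{kl}}=\sigma'(f_{ab})\frac{\partial f_{ab}}{\partial w_{kl}}\) inside each block, and then recognize the resulting block matrix of repeated copies of \(\mathbf{\Sigma}'(\mathbf{F}(\mathbf{W}))\) as \(\mathbf{J}_{p\times q}\otimes \mathbf{\Sigma}'(\mathbf{F}(\mathbf{W}))\) acting through the Hadamard product. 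The only cosmetic difference is that the paper computes a representative block \(\frac{\partial \mathbf{\Sigma}(\mathbf{F}(\mathbf{W}))}{\partial w_{11}}\) and extends to all blocks, whereas you phrase the same bookkeeping fully entrywise.
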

\begin{proof} The proof can be found in Appendix \ref{ProofTheorem}.
\end{proof}

From theorem \ref{thm:arbact}, we can write the last term in (\ref{2-1-1-E1}) and (\ref{2-1-1-E2}) as
\begin{equation}
\begin{aligned}
\frac{\partial h_{3ij}}{\partial \mathbf{W}_{3}} &= \frac{\partial \sigma_{ReLU}(\mathbf{A}_{i*}\mathbf{H}_{2}\mathbf{W}_{3_{*j}})}{\partial \mathbf{W}_{3}}\\ 
&= (\mathbf{J}_{n_2 \times n_3} \otimes \sigma_{ReLU}'(\mathbf{A}_{i*}\mathbf{H}_{2}\mathbf{W}_{3_{*j}})) \odot \frac{\partial \mathbf{A}_{i*}\mathbf{H}_{2}\mathbf{W}_{3_{*j}}}{\partial \mathbf{W}_{3}} .
\label{2-1-1-E3}
\end{aligned}
\end{equation}

Using (\ref{T2}), the last term in (\ref{2-1-1-E3}) can be written as 
\begin{equation}
\begin{aligned}
\frac{\partial \mathbf{A}_{i*}\mathbf{H}_{2}\mathbf{W}_{3_{*j}}}{\partial \mathbf{W}_{3}} &= \frac{\partial \mathbf{A}_{i*}\mathbf{H}_{2}}{\partial \mathbf{W}_{3}}(\mathbf{I}_{n_3} \otimes\mathbf{W}_{3_{*j}})+(\mathbf{I}_{n_2} \otimes(\mathbf{A}_{i*}\mathbf{H}_{2}))\frac{\partial \mathbf{W}_{3_{*j}}}{\partial \mathbf{W}_{3}}\\ 
&= (\mathbf{I}_{n_2} \otimes(\mathbf{A}_{i*}\mathbf{H}_{2}))\frac{\partial \mathbf{W}_{3_{*j}}}{\partial \mathbf{W}_{3}} .
\label{2-1-1-E4}
\end{aligned}
\end{equation}

Using (\ref{T5}), (\ref{T2}), and (\ref{T3}), the last term in (\ref{2-1-1-E4}) can be written as
\begin{equation}
\begin{aligned}
\frac{\partial \mathbf{W}_{3_{*j}}}{\partial \mathbf{W}_{3}} &=\frac{\partial \mathbf{W}_{3}\underset{(n_3)}{\mathbf{e}_{j}}}{\partial \mathbf{W}_{3}}\\ 
&=\frac{\partial \mathbf{W}_{3}}{\partial \mathbf{W}_{3}}(\mathbf{I}_{n_3}\otimes \underset{(n_3)}{\mathbf{e}_{j}})
+(\mathbf{I}_{n_2}\otimes \mathbf{W}_{3})\frac{\partial \underset{(n_3)}{\mathbf{e}_{j}}}{\partial \mathbf{W}_{3}}\\ 
&=\mathbf{\bar{U}}_{n_2\times n_3}(\mathbf{I}_{n_3}\otimes \underset{(n_3)}{\mathbf{e}_{j}}) ,
\label{2-1-1-E5}
\end{aligned}
\end{equation}
where \(\underset{(n_3)}{\mathbf{e}_{j}}\) is a \(n_3\)-dimensional column vector which has “1” in the \(j\)th row and zero elsewhere and \(\mathbf{\bar{U}}_{n_2\times n_3}\in\mathbb{R}^{n_2^2\times n_3^2}\) is a permutation related matrix defined in (\ref{RelateMatrix}).

Thus, the derivative of the loss function \(L\) with respect to the third weight matrix \(\mathbf{W}_{3}\) in (\ref{Dnodeloss}) can be written as
\begin{align}
\frac{\partial L}{\partial \mathbf{W}_{3}} &= -\sum_{i=1}^{n} \sum_{j=1}^{n_{3}} ((y_{ij}-\hat{y}_{ij})\cdot \frac{\partial h_{3ij}}{\partial \mathbf{W}_{3}}) \label{2-1-1-E6-3}\\
&= -\sum_{i=1}^{n} \sum_{j=1}^{n_{3}} ((y_{ij}-\hat{y}_{ij}) \nonumber \cdot((\mathbf{J}_{n_2 \times n_3}\otimes \sigma_{ReLU}'(\mathbf{A}_{i*}\mathbf{H}_{2}\mathbf{W}_{3_{*j}})) \nonumber \\
&\quad\odot ((\mathbf{I}_{n_2} \otimes(\mathbf{A}_{i*}\mathbf{H}_{2}))\mathbf{\bar{U}}_{n_2\times n_3}(\mathbf{I}_{n_3}\otimes \underset{(n_3)}{\mathbf{e}_{j}})))) .\label{2-1-1-E6-6} 
\end{align}

Using (\ref{2-1-1-E6-3}), the derivative of the loss function \(L\) in (\ref{nodeloss-2}) with respect to the second weight matrix \(\mathbf{W}_{2}\) can be written as
\begin{align}
\frac{\partial L}{\partial \mathbf{W}_{2}} 
&= -\sum_{i=1}^{n} \sum_{j=1}^{n_{3}} ((y_{ij}-\hat{y}_{ij})\cdot \frac{\partial h_{3ij}}{\partial \mathbf{W}_{2}}) \label{2-1-1-E7-1} \\ 
&= -\sum_{i=1}^{n} \sum_{j=1}^{n_{3}} ((y_{ij}-\hat{y}_{ij})\cdot ((\mathbf{J}_{n_1 \times n_2}\otimes \sigma_{ReLU}'(\mathbf{A}_{i*}\mathbf{H}_{2}\mathbf{W}_{3_{*j}})) \odot \frac{\partial \mathbf{A}_{i*}\mathbf{H}_{2}\mathbf{W}_{3_{*j}}}{\partial \mathbf{W}_{2}})) ,\label{2-1-1-E7-2} 
\end{align}
where (\ref{2-1-1-E7-2}) follows from theorem \ref{thm:arbact}.

Using (\ref{T2}), the last term of (\ref{2-1-1-E7-2}) can be written as 
\begin{align}
\frac{\partial \mathbf{A}_{i*}\mathbf{H}_{2}\mathbf{W}_{3_{*j}}}{\partial \mathbf{W}_{2}} 
&= (\mathbf{I}_{n_1}\otimes \mathbf{A}_{i*})\frac{\partial \mathbf{H}_{2}}{\partial \mathbf{W}_{2}}(\mathbf{I}_{n_2}\otimes \mathbf{W}_{3_{*j}}) . \label{2-1-1-E8-5} 
\end{align}

The derivative of \(\mathbf{H}_{2}\) with respect to \(\mathbf{W}_{2}\) in (\ref{2-1-1-E8-5}) can be written as
\begin{align}
\frac{\partial \mathbf{H}_{2}}{\partial \mathbf{W}_{2}} 
&= \frac{\partial \sigma_{ReLU}(\mathbf{A}\mathbf{H}_{1}\mathbf{W}_{2})}{\partial \mathbf{W}_{2}} \label{2-1-1-E9-1} \\ 
&= (\mathbf{J}_{n_1 \times n_2}\otimes \sigma_{ReLU}'(\mathbf{A}\mathbf{H}_{1}\mathbf{W}_{2})) \odot \frac{\partial \mathbf{A}\mathbf{H}_{1}\mathbf{W}_{2}}{\partial \mathbf{W}_{2}}) \label{2-1-1-E9-2} \\ 
&= (\mathbf{J}_{n_1 \times n_2}\otimes \sigma_{ReLU}'(\mathbf{A}\mathbf{H}_{1}\mathbf{W}_{2})) \odot ((\mathbf{I}_{n_1}\otimes (\mathbf{A}\mathbf{H}_{1}))\frac{\partial \mathbf{W}_{2}}{\partial \mathbf{W}_{2}}) \label{2-1-1-E9-3} \\ 
&= (\mathbf{J}_{n_1 \times n_2}\otimes \sigma_{ReLU}'(\mathbf{A}\mathbf{H}_{1}\mathbf{W}_{2})) \odot ((\mathbf{I}_{n_1}\otimes (\mathbf{A}\mathbf{H}_{1}))\mathbf{\bar{U}}_{n_1\times n_2}), \label{2-1-1-E9-4} 
\end{align}
where (\ref{2-1-1-E9-1}) follows from (\ref{H2def3GCNnode}), (\ref{2-1-1-E9-2}) follows from (\ref{2-1-1-E3}), (\ref{2-1-1-E9-3}) follows from (\ref{T2}), and (\ref{2-1-1-E9-4}) follows from (\ref{T3}).

The derivative of the loss in (\ref{nodeloss-2}) with respect to the first weight matrix \(\mathbf{W}_{1}\) is
\begin{align}
\frac{\partial L}{\partial \mathbf{W}_{1}} &= -\sum_{i=1}^{n} \sum_{j=1}^{n_{3}} ((y_{ij}-\hat{y}_{ij}) \cdot((\mathbf{J}_{n_0 \times n_1}\otimes \sigma_{ReLU}'(\mathbf{A}_{i*}\mathbf{H}_{2}\mathbf{W}_{3_{*j}})) \nonumber \\
&\qquad\qquad\odot ((\mathbf{I}_{n_0} \otimes\mathbf{A}_{i*})\frac{\partial \mathbf{H}_{2}}{\partial \mathbf{W}_{1}}(\mathbf{I}_{n_1}\otimes \mathbf{W}_{3_{*j}})))), \label{2-1-2-E4}
\end{align}
where (\ref{2-1-2-E4}) follows from (\ref{2-1-1-E7-2}) and 
\begin{align}
\frac{\partial \mathbf{H}_{2}}{\partial \mathbf{W}_{1}}
&=\frac{\partial \sigma_{ReLU}(\mathbf{A}\mathbf{H}_{1}\mathbf{W}_{2})}{\partial \mathbf{W}_{1}} \label{2-1-2-E5-1} \\
&=(\mathbf{J}_{n_0 \times n_1}\otimes \sigma_{ReLU}'(\mathbf{A}\mathbf{H}_{1}\mathbf{W}_{2})) \odot ((\mathbf{I}_{n_0} \otimes\mathbf{A})\frac{\partial \mathbf{H}_{1}}{\partial \mathbf{W}_{1}}(\mathbf{I}_{n_1}\otimes \mathbf{W}_{2})) \label{2-1-2-E5-2},
\end{align}
where (\ref{2-1-2-E5-1}) follows from (\ref{H2def3GCNnode}), (\ref{2-1-2-E5-2}) follows from theorem (\ref{thm:arbact}) and (\ref{T2}), and
\begin{align}
\frac{\partial \mathbf{H}_{1}}{\partial \mathbf{W}_{1}}
&=(\mathbf{J}_{n_0 \times n_1}\otimes \sigma_{ReLU}'(\mathbf{A}\mathbf{H}_{0}\mathbf{W}_{1})) \odot ((\mathbf{I}_{n_0} \otimes(\mathbf{A}\mathbf{H}_{0}))\mathbf{\bar{U}}_{n_0\times n_1})\label{2-1-2-E6-1},
\end{align}
where (\ref{2-1-2-E6-1}) follows from (\ref{2-1-1-E9-4}).

\subsubsection{Back-propagation for multi-layer GCN  with ReLU and sigmoid activation function}

We consider a \(d\)-layer GCN defined as
\begin{equation}
\mathbf{\hat{Y}} = \sigma_{sigmoid}(\mathbf{H}_{d}),
\end{equation}
where 
\begin{equation}
\mathbf{H}_{d} = \sigma_{ReLU}(\mathbf{A}\mathbf{H}_{d-1}\mathbf{W}_{d}),
\end{equation}
\(d\in \mathbb{Z}^+\), \(\mathbf{A} \in {\{0,1\}}^{n \times n}\) is an \(n \times n\) adjacency matrix, \(\mathbf{H}_{d-1} \in \mathbb{R}^{n \times n_{d-1}}\) is the feature matrix  for \(n\) nodes with \(n_{d-1}\) features, \(\mathbf{\hat{Y}} \in \mathbb{R}^{n \times n_{d}}\) is the output matrix, \(n_{d}=1\) for binary node classification, and \(\mathbf{W}_{1} \in \mathbb{R}^{n_{0} \times n_{1}}\), \(\mathbf{W}_{2} \in \mathbb{R}^{n_{1} \times n_{2}}\), \(\cdots\), and \(\mathbf{W}_{d} \in \mathbb{R}^{n_{d-1} \times n_{d}}\) are trainable parameter matrices.

By observing (\ref{2-1-1-E6-6}), (\ref{2-1-1-E7-2}), (\ref{2-1-1-E8-5}), (\ref{2-1-1-E9-4}), (\ref{2-1-2-E4}), (\ref{2-1-2-E5-2}), and (\ref{2-1-2-E6-1}), if the loss function of the \(d\)-layer GCN for node classification is defined as
\begin{equation}
L = -\sum_{i=1}^{n} \sum_{j=1}^{n_{d}} (y_{ij}ln(\hat{y}_{ij})+(1-y_{ij})ln(1-\hat{y}_{ij})),
\label{nodeloss-3}
\end{equation}
the derivative of the loss in (\ref{nodeloss-3}) with respect to the \(s\)th weight matrix \(\mathbf{W}_{s}\) is
\begin{align}
\frac{\partial L}{\partial \mathbf{W}_{s}} 
&=-\sum_{i=1}^{n} \sum_{j=1}^{n_{d}} ((y_{ij}-\hat{y}_{ij}) \cdot((\mathbf{J}_{(s-1) \times s}\otimes \sigma_{ReLU}'(\mathbf{A}_{i*}\mathbf{H}_{d-1}\mathbf{W}_{d_{*j}})) \nonumber \\
&\qquad\qquad\odot \frac{\partial \mathbf{A}_{i*}\mathbf{H}_{d-1}\mathbf{W}_{d_{*j}}}{\partial \mathbf{W}_{s}})) ,\label{2-1-3-E1-1} 
\end{align}
where \(s\in \mathbb{Z}^+\), \(s\leq d\), 
\begin{equation}
\begin{aligned}
\frac{\partial \mathbf{A}_{i*}\mathbf{H}_{d-1}\mathbf{W}_{d_{*j}}}{\partial \mathbf{W}_{s}} =\begin{cases}
&\begin{aligned}
&(\mathbf{I}_{n_{s-1}} \otimes(\mathbf{A}_{i*}\mathbf{H}_{d-1}))\cdot\mathbf{\bar{U}}_{n_{s-1}\times n_s}(\mathbf{I}_{n_s}\otimes \underset{(n_d)}{\mathbf{e}_{j}}),
\end{aligned}\quad \text{if $s=d$}\\
&\begin{aligned}
&(\mathbf{I}_{n_{s-1}} \otimes\mathbf{A}_{i*})\cdot\frac{\partial \mathbf{H}_{d-1}}{\partial \mathbf{W}_{s}}(\mathbf{I}_{n_s}\otimes \mathbf{W}_{d_{*j}}),
\end{aligned}\quad \text{if $s<d$}
\end{cases}
\end{aligned}
\label{2-1-3-E2}
\end{equation}
and
\begin{equation}
\begin{aligned}
&\frac{\partial \mathbf{H}_{d-1}}{\partial \mathbf{W}_{s}} \\
&=\begin{cases}
&\begin{aligned}
&(\mathbf{J}_{n_{s-1} \times n_s}\otimes \sigma_{ReLU}'(\mathbf{A}\mathbf{H}_{d-2}\mathbf{W}_{d-1})) \odot ((\mathbf{I}_{n_{s-1}} \otimes(\mathbf{A}\mathbf{H}_{d-2}))\mathbf{\bar{U}}_{n_{s-1}\times n_s}),
\end{aligned}\quad\\
&\text{if $s=d-1$}\\
&\begin{aligned}
&(\mathbf{J}_{n_{s-1} \times n_s}\otimes \sigma_{ReLU}'(\mathbf{A}\mathbf{H}_{d-2}\mathbf{W}_{d-1})) \odot ((\mathbf{I}_{n_{s-1}} \otimes\mathbf{A})\frac{\partial \mathbf{H}_{d-2}}{\partial \mathbf{W}_{s}}(\mathbf{I}_{n_s} \otimes\mathbf{W}_{d-1})),
\end{aligned}\quad \\
&\text{if $s< d-1$}.
\end{cases}
\end{aligned}
\label{2-1-3-E3}
\end{equation}

\subsubsection{Back-propagation for multi-layer GCN  with arbitrary activation functions}
\label{BPMLGCNAAF_NC}

We consider a \(d\)-layer GCN defined as
\begin{equation}
\mathbf{\hat{Y}} = \mathbf{\Sigma}_{d+1}(\mathbf{H}_{d}),
\label{dGCNnode}
\end{equation}
where 
\begin{equation}
\mathbf{H}_{d} = \mathbf{\Sigma}_{d}(\mathbf{A}\mathbf{H}_{d-1}\mathbf{W}_{d}),
\end{equation}
\(d\in \mathbb{Z}^+\), \(\mathbf{A} \in {\{0,1\}}^{n \times n}\) is an \(n \times n\) adjacency matrix, \(\mathbf{H}_{d-1} \in \mathbb{R}^{n \times n_{d-1}}\) is the feature matrix  for \(n\) nodes with \(n_{d-1}\) features, \(\mathbf{\hat{Y}} \in \mathbb{R}^{n \times n_{d}}\) is the output matrix, \(n_{d}=1\) for binary node classification, \(\mathbf{W}_{1} \in \mathbb{R}^{n_{0} \times n_{1}}\), \(\mathbf{W}_{2} \in \mathbb{R}^{n_{1} \times n_{2}}\), \(\cdots\), and \(\mathbf{W}_{d} \in \mathbb{R}^{n_{d-1} \times n_{d}}\) are trainable parameter matrices, and \(\mathbf{\Sigma}_{1}\), \(\mathbf{\Sigma}_{2}\), \(\cdots\), and \(\mathbf{\Sigma}_{d+1}\) are any element-wise activation function.

If the loss function of the \(d\)-layer GCN for node classification is defined as
\begin{equation}
L = -\sum_{i=1}^{n} \sum_{j=1}^{n_{d}} (y_{ij}ln(\hat{y}_{ij})+(1-y_{ij})ln(1-\hat{y}_{ij})),
\label{nodeloss-4}
\end{equation}
the derivative of the loss in (\ref{nodeloss-4}) with respect to the \(s\)th weight matrix \(\mathbf{W}_{s}\) is
\begin{align}
\frac{\partial L}{\partial \mathbf{W}_{s}} 
&=-\sum_{i=1}^{n} \sum_{j=1}^{n_{d}} (y_{ij}\frac{\partial ln(\hat{y}_{ij})}{\partial \mathbf{W}_{s}}+(1-\hat{y}_{ij}) \frac{\partial ln(1-\hat{y}_{ij})}{\partial \mathbf{W}_{s}}) \nonumber \\
&=-\sum_{i=1}^{n} \sum_{j=1}^{n_{d}} ((\frac{y_{ij}}{\hat{y}_{ij}}-\frac{1-y_{ij}}{1-\hat{y}_{ij}})\mathbf{\Sigma}_{d+1}'(h_{d_{ij}}) \cdot\frac{\partial \mathbf{\Sigma}_{d}(\mathbf{A}_{i*}\mathbf{H}_{d-1}\mathbf{W}_{d_{*j}})}{\partial \mathbf{W}_{s}}) \label{2-1-4-E1-2} \\
&=-\sum_{i=1}^{n} \sum_{j=1}^{n_{d}} (\frac{y_{ij}-\hat{y}_{ij}}{\hat{y}_{ij}(1-\hat{y}_{ij})}\mathbf{\Sigma}_{d+1}'(h_{d_{ij}}) \cdot((\mathbf{J}_{(s-1) \times s}\otimes \mathbf{\Sigma}_{d}'(\mathbf{A}_{i*}\mathbf{H}_{d-1}\mathbf{W}_{d_{*j}})) \nonumber \\
&\qquad\qquad\odot \frac{\partial \mathbf{A}_{i*}\mathbf{H}_{d-1}\mathbf{W}_{d_{*j}}}{\partial \mathbf{W}_{s}})) , \label{2-1-4-E1-3} 
\end{align}
where \(s\in \mathbb{Z}^+\), \(s\leq d\), (\ref{2-1-4-E1-2}) follows from chain rule and definition \ref{def:derivelemact}, (\ref{2-1-4-E1-3}) follows from theorem (\ref{thm:arbact}), \(\frac{\partial \mathbf{A}_{i*}\mathbf{H}_{d-1}\mathbf{W}_{d_{*j}}}{\partial \mathbf{W}_{s}}\) is the same as in (\ref{2-1-3-E2})
, and
\begin{equation}
\begin{aligned}
&\frac{\partial \mathbf{H}_{d-1}}{\partial \mathbf{W}_{s}} \\
&=\begin{cases}
&\begin{aligned}
&(\mathbf{J}_{n_{s-1} \times n_s}\otimes \mathbf{\Sigma}_{d-1}'(\mathbf{A}\mathbf{H}_{d-2}\mathbf{W}_{d-1})) \odot ((\mathbf{I}_{n_{s-1}} \otimes(\mathbf{A}\mathbf{H}_{d-2}))\mathbf{\bar{U}}_{n_{s-1}\times n_s})
\end{aligned}\quad \\
&\text{if $s=d-1$}\\
&\begin{aligned}
&(\mathbf{J}_{n_{s-1} \times n_s}\otimes \mathbf{\Sigma}_{d-1}'(\mathbf{A}\mathbf{H}_{d-2}\mathbf{W}_{d-1})) \odot ((\mathbf{I}_{n_{s-1}} \otimes\mathbf{A})\frac{\partial \mathbf{H}_{d-2}}{\partial \mathbf{W}_{s}}(\mathbf{I}_{n_s} \otimes\mathbf{W}_{d-1}))
\end{aligned}\quad \\
&\text{if $s< d-1$}.
\end{cases}
\end{aligned}
\label{2-1-4-E3}
\end{equation} 

Following a similar procedure, we can derive the sensitivity of the loss with respect to the feature matrix \(\mathbf{H}_{0}\), as shown in Appendix \ref{sens_node}.

\subsection{Link prediction}

For the derivation of back-propagation and sensitivity in a multi-layer GCN with arbitrary activation functions for the link prediction problem, please refer to Appendices \ref{DerLink} and \ref{sens_link}, respectively.

\section{Experiments}

In this section, we demonstrate the correctness of our method through several experiments considering a node classification problem on Zachary’s Karate Club \cite{zachary1977information} in Figure~\ref{a-Karate_Drug} (\textbf{a}) and a link prediction problem on a drug-drug interaction (DDI) network in Figure~\ref{a-Karate_Drug} (\textbf{b}). We adopt the stochastic gradient descent (SGD) for model training. We compare our method to the gradient computation using reverse mode automatic differentiation in Pytorch \cite{paszke2019pytorch} by computing the sum of squared error (SSE) defined as
\begin{equation}
SSE = \sum_{i=1}^{n_{s-1}} \sum_{j=1}^{n_{s}} (w_{s_{ij}}^{(AD)}-w_{s_{ij}}^{(KP)})^2,
\label{SSE}
\end{equation}
where \(w_{s_{ij}}^{(AD)}\) is the \(i\)th row and the \(j\)th column of the \(s\)th weight matrix \(\mathbf{W}_{s}\) updated using SGD with reverse mode automatic differentiation and \(w_{s_{ij}}^{(KP)}\) is the \(i\)th row and the \(j\)th column of the \(s\)th weight matrix \(\mathbf{W}_{s}\) updated using SGD with our matrix-based method.

In addition, we determine the sensitivity of the loss with respect to the feature matrix \(\mathbf{H}_{0}\) for node classification and link prediction to demonstrate the application of our method to XAI.

The python codes are available at: \url{https://github.com/AnnonymousForPapers/GCN-proof/tree/main}.


\subsection{Node classification}

We tested our method for node classification on Zachary’s Karate Club \cite{zachary1977information} (see Appendix \ref{DataNode} for more details). 


\subsubsection{1-layer GCN with identity function and sigmoid activation function}
\label{Node1GCN}

The \(1\)-layer GCN is defined by (\ref{dGCNnode}) and the loss function is defined by (\ref{nodeloss-4}), where \(d=1\) is the number of layers, \(\mathbf{H}_{0} = \mathbf{I}_{n} \in \mathbb{R}^{n \times n}\) is the feature matrix for \(n=34\) nodes with \(n=34\) features, the matrix \(\mathbf{\hat{Y}} \in \mathbb{R}^{n \times n_{d}}\) represents the ground truth of nodes, indicating class assignments per node, \(\mathbf{\hat{Y}} \in \mathbb{R}^{n \times n_{d}}\) denotes the final layer predictions of the GCN model, \(n_d=1\) for binary node classification, \(\mathbf{W}_{1} \in \mathbb{R}^{n_{0} \times n_{d}}\) is a trainable parameter matrix, \(n_0=34\), \(\mathbf{\Sigma}_{1}\) is an element-wise identity function, and \(\mathbf{\Sigma}_{2}\) is an element-wise sigmoid function.

The GCN is trained using SGD with a learning rate of 0.1 and with 100 iterations for both methods. In Figure~\ref{node_evolution}, the evolution of the karate club network shows that the two graph updated by the two methods have the same loss, accuracy, and classification results. In Figure~\ref{dW_small} (\textbf{a}), it shows the SSE of the weight matrix \(\mathbf{W}_{1}\) between the reverse mode automatic differentiation and our matrix-based method is lower than \(10^{-13}\). This indicates that our analytic expression aligns with the exact method, and the small difference might be due to the precision of different datatypes.

Next, we show the loss sensitivity with respect to the input feature matrix, computed using the result in Appendix \ref{sens_node}. In Figure \ref{Sens} (\textbf{a}), it is shown that as training progresses, the sensitivity decreases. This result is expected since the input feature matrix of the GCN for the Karate graph is an identity matrix. The prediction of the class of a node depends only on its edges rather than the input features. Therefore, changes in the input matrix should not significantly affect the prediction from the trained GCN, as it should learn to make predictions based solely on the existence of edges between nodes.

Additional experimental results on Zachary’s Karate Club \cite{zachary1977information} are provided in Appendix \ref{Node5GCN}.

\subsection{Link prediction}

We tested our method for link prediction on a 10-node DDI network (see Appendix \ref{DataLink} for more details).

\subsubsection{2-layer GCN}
\label{Link1GCN}

The \(2\)-layer GCN is defined by (\ref{def2GCNlink}) and the loss function is defined by (\ref{Linkloss}), where \(d=2\) is the number of layers, the matrix \(\mathbf{\hat{Y}} \in \mathbb{R}^{n \times n}\) represents ground-truth adjacency matrix, \(\mathbf{\hat{Y}} \in \mathbb{R}^{n \times n}\) denotes the final link predictions of the GCN model, \(\mathbf{W}_{1} \in \mathbb{R}^{n_{0} \times n_{1}}\) and \(\mathbf{W}_{2} \in \mathbb{R}^{n_{1} \times n_{2}}\) are trainable parameter matrices, \(n_0=20\), \(n_1=10\), \(n_2=5\), \(\mathbf{\Sigma}_{1}\) is an element-wise ReLU function \cite{nair2010rectified}, \(\mathbf{\Sigma}_{2}\) is an element-wise identity function, and \(\mathbf{\Sigma}_{3}\) is an element-wise sigmoid function.

The GCN is trained using SGD with a learning rate of 0.01 and with 150 iterations for both methods. In each iteration, thirteen negative edges, which is the same number as the number of positive edges, are uniformly sampled from a set of all the unconnected edges for the GCN training using the reverse mode automatic differentiation. The sampled negative edges in each iteration are saved in a list and used in the training of the GCN using our matrix-based method. In Figure~\ref{link_evolution}, the evolution of the DDI network shows that the two graph updated by the two methods have the same loss and classification results. In Figure~\ref{dW_small} (\textbf{b}), it shows the SSE of the two weight matrices, \(\mathbf{W}_{1}\) and \(\mathbf{W}_{2}\), between the reverse mode automatic differentiation and our matrix-based method is lower than \(10^{-14}\). 

Next, we show the output sensitivity with respect to the input feature matrix, computed using the result in Appendix \ref{sens_link}. In Figure \ref{Sens} (\textbf{b}), a heat map of the sensitivity of the prediction for the link between node 2 and node 7 is shown. The sensitivities of the features in node 2 and node 7 are higher than those of the features in other nodes. The features of nodes (node 1 and node 8) that do not have a connection to either node 2 or node 7, or the features of nodes (node 0) that need to traverse at least three edges to reach node 2 or node 7, have almost zero sensitivity to the link. This demonstrates the property of GCNs, where the neighborhood information of two nodes is aggregated by taking the weighted sum of the features of neighboring nodes \cite{ahmedt2022survey}. Since the GCN has only two layers, the information of nodes that are three edges away from these two nodes does not contribute to the prediction of the edge.

Additional experimental results on the DDI network are provided in Appendix \ref{Link5GCN}. Heat maps of the sensitivity of the prediction for all the links are in Figures \ref{fig:H1}, \ref{fig:H2}, and \ref{fig:H3} in the Appendix.

\begin{figure}[!t]    \includegraphics[width=\linewidth]{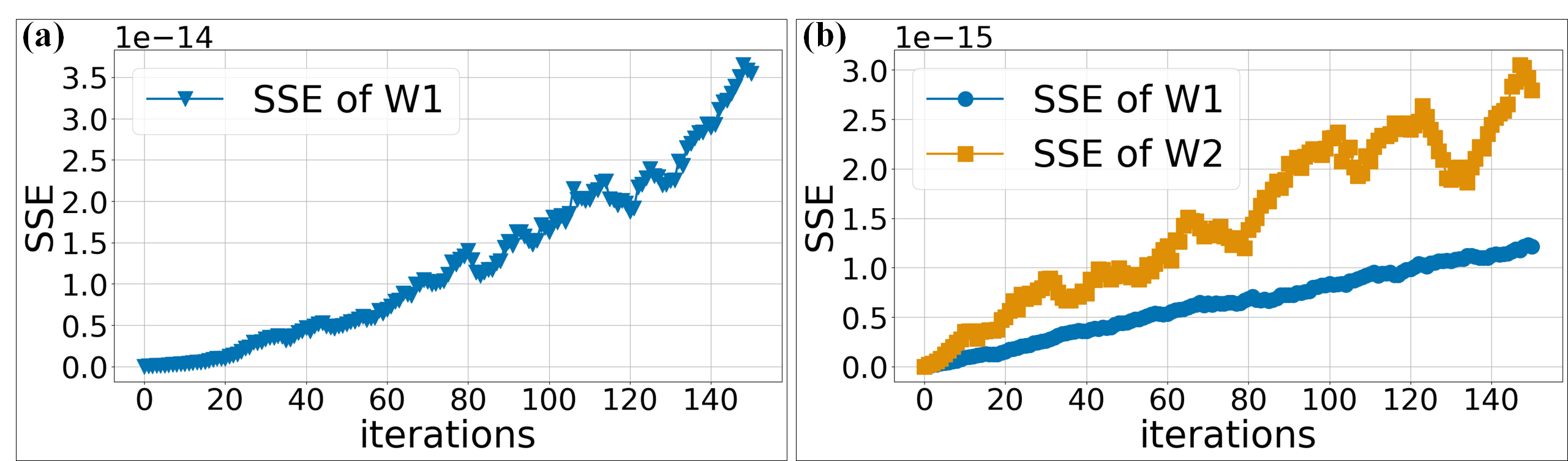}
        \caption{(\textbf{a}) The evolution of the sum of squared error between the trainable weight matrix obtained from our method and the matrix obtained using reverse mode automatic differentiation in section \ref{Node1GCN}. (\textbf{b}) The evolution of the sum of squared error between the two trainable weight matrices obtained from our method and the matrices obtained using reverse mode automatic differentiation in section \ref{Link1GCN}.}
        \label{dW_small}
\end{figure}

\begin{figure}[!t]    \includegraphics[width=\linewidth]{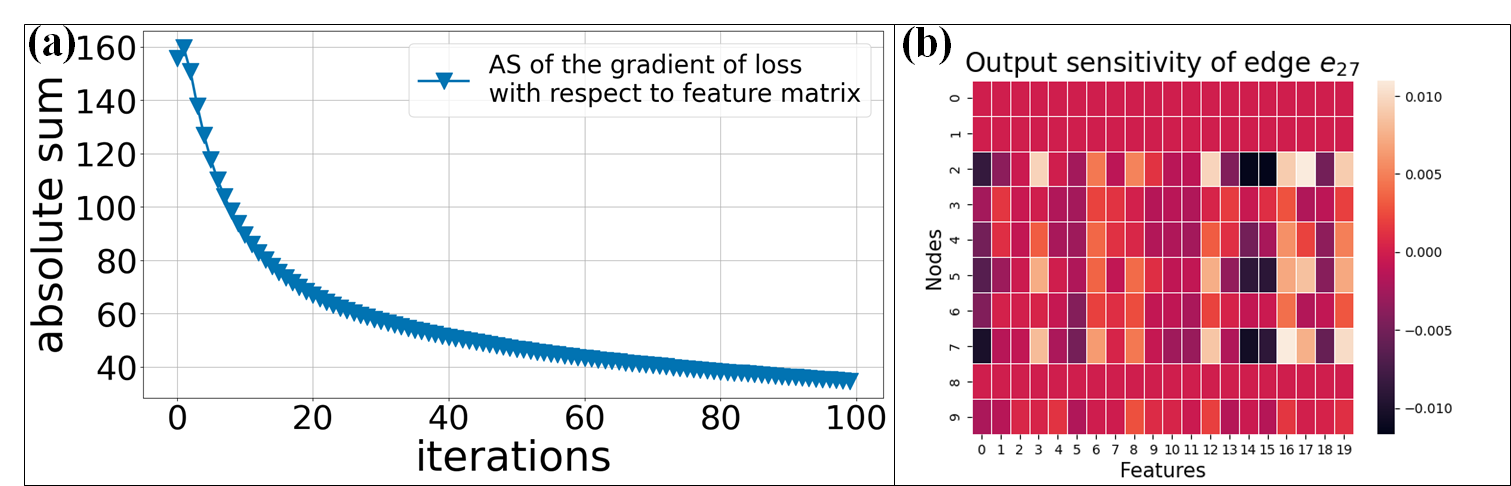}
        \caption{(\textbf{a}) The evolution of the absolute sum of the sensitivity of the loss with respect to the input feature matrix \(\mathbf{H}_{0}\) in Section \ref{Node1GCN}. (\textbf{b}) The heat map of the sensitivity of the prediction for the link between node 2 and node 7 with respect to the input feature matrix \(\mathbf{H}_{0}\) in Section \ref{Link1GCN}.}
        \label{Sens}
\end{figure}

\section{Conclusion}
In this work, we provide detailed derivations of the analytical expressions in matrix form for the derivatives of a loss function with respect to each weight matrix for a graph convolutional network considering binary node classification and link prediction. Utilizing Kronecker product, Hadamard product, and matrix calculus, we computed the gradients of the loss function in a three-layer graph convolutional network and extended the solution to accommodate graph convolutional networks with arbitrary layers and element-wise activation functions. The weight matrices obtained through our approach were compared with those obtained using reverse mode automatic differentiation in binary node classification experiments conducted on a 34 nodes Zachary’s Karate Club network \cite{zachary1977information} and in link prediction experiments using a 10-node drug-drug interaction network. The experimental results indicate that the discrepancy between the weight matrices has a median sum of squared error ranging from \(10^{-18}\) to \(10^{-14}\), affirming the accuracy of our methodology. In addition, we conduct sensitivity analysis for binary node classification and link prediction to demonstrate the application of our derivation method in XAI.

We note that our derivation already accommodates RNNs when \(\mathbf{A}\) is an identity matrix, and it accommodates CNNs since CNNs operate on 2-dimensional matrices, which are a special case of graphs. Additionally, our method incurs a significantly higher computational cost compared to reverse mode AD. As future work, we aim to find a way to improve the computational speed of back-propagation through our derived analytical solution.

\bibliographystyle{unsrt}
\bibliography{example_paper}

\appendix

\section{Basic notation and properties of Kronecker product and matrix calculus}
\label{Notations}
\subsection{Basic notation}
A column vector is denoted by lower case boldface (e.g., 
\(\mathbf{v}\), with its \(i\)th element being \(\mathbf{v}_{i}\)). Matrices is denoted by upper case boldface (e.g., \(\mathbf{A}\)). The \(i\)th row for a matrix such as \(\mathbf{A}\) is denoted \(\mathbf{A}_{i*}\) and the \(i\)th column is denoted \(\mathbf{A}_{*i}\). The \((i,j)\) element of \(\mathbf{A}\) is denoted \(a_{ij}\). An \((m \times n)\) all-ones matrix where all of its elements are equal to \(1\) is denoted as \(\mathbf{J}_{m \times n}\). An \((m \times n)\) zero matrix where all of its elements are equal to \(0\) is denoted as \(\mathbf{O}_{m \times n}\). The \((n \times n)\) identity matrix is denoted \(\mathbf{I_{n}}\). The \(k\)-dimensional column vector which has “1” in the \(j\)th row and zero elsewhere is called the unit vector and has denoted \(\underset{(k)}{\mathbf{e}_{j}}\). The \(i\)th column of a matrix \(\mathbf{A}\) can be written as:
\begin{equation}
\mathbf{A}_{*i} = \mathbf{A} \cdot \underset{(q)}{\mathbf{e}_{i}}.
\label{T5}
\end{equation}
The elementary matrix 
\begin{equation}
\mathbf{E}_{ij}^{(p\times q)} \triangleq \underset{(p)}{\mathbf{e}_{i}}\underset{(q)}{\mathbf{e}_{j}^T}
\end{equation}
has dimensions \((p\times q)\), has “1” in the \((i,j)\)th element, and has zero elsewhere.


The Kronecker product of a matrix \(\mathbf{A}\) with dimensions \((p\times q)\) and a matrix \(\mathbf{B}\) with dimensions \((m\times n)\) is represented as \(\mathbf{A}\otimes \mathbf{B}\). The resulting matrix is of size \(pm\times qn\) and is defined by
\begin{equation}
\mathbf{A}\otimes \mathbf{B} \triangleq 
\begin{bmatrix}
a_{11}\mathbf{B} & a_{12}\mathbf{B} & \cdots & a_{1q}\mathbf{B}\\
a_{21}\mathbf{B} &    &    & \vdots \\
\vdots &   &    &  \\
a_{p1}\mathbf{B} & \cdots &   & a_{pq}\mathbf{B}
\end{bmatrix}.
\label{Kronecker}
\end{equation}

The Hadamard product of \(\mathbf{A}\) \((p\times q)\) and \(\mathbf{C}\) \((p\times q)\) is denoted \(\mathbf{A}\odot \mathbf{C}\) and is a \((p\times q)\) matrix defined by
\begin{equation}
\mathbf{A} \odot \mathbf{C} \triangleq 
\begin{bmatrix}
a_{11}c_{11} & a_{12}c_{12} & \cdots & a_{1q}c_{1q}\\
a_{21}c_{21} &    &    & \vdots \\
\vdots &   &    &  \\
a_{p1}c_{p1} & \cdots &   & a_{pq}c_{pq}
\end{bmatrix}.
\label{Hadamard}
\end{equation}

The permutation matrix is a square \((pq\times pq)\) matrix defined by \cite{brewer1978kronecker}
\begin{equation}
\mathbf{U}_{p\times q} \triangleq \sum_{i}^{p} \sum_{j}^{q} \mathbf{E}_{ij}^{(p\times q)}\otimes \mathbf{E}_{ji}^{(q\times p)}.
\end{equation}
Each row and column of the square matrix \(\mathbf{U}_{p\times q}\) has only one element with a value of \(1\), and all remaining elements are zero \cite{zhu2024random}.

A related matrix is a rectangular \((p^2\times q^2)\) matrix defined by \cite{brewer1978kronecker}

\begin{equation}
\mathbf{\bar{U}}_{p\times q} \triangleq \sum_{i}^{p} \sum_{j}^{q} \mathbf{E}_{ij}^{(p\times q)}\otimes \mathbf{E}_{ij}^{(p\times q)}.
\label{RelateMatrix}
\end{equation}

The matrix derivative of a matrix function \(\mathbf{F}=[f_{ij}(\mathbf{W})]_{m \times n}: \mathbb{R}^{p \times q} \rightarrow \mathbb{R}^{m \times n}\) with respect to a matrix \(\mathbf{X}\)\((p\times q)\) is of order \((pm\times qn)\) and is defined as \cite{magnus2010concept}

\begin{equation}
\frac{\partial \mathbf{F}(\mathbf{X})}{\partial \mathbf{X}} \triangleq 
\begin{bmatrix}
\frac{\partial \mathbf{F}(\mathbf{X})}{\partial x_{11}} & \cdots & \frac{\partial \mathbf{F}(\mathbf{X})}{\partial x_{1q}}\\
\vdots & & \vdots \\
\frac{\partial \mathbf{F}(\mathbf{X})}{\partial x_{p1}} & \cdots & \frac{\partial \mathbf{F}(\mathbf{X})}{\partial x_{pq}}
\end{bmatrix},
\label{DerMatrixFunction}
\end{equation}
where
\begin{equation}
\frac{\partial \mathbf{F}(\mathbf{X})}{\partial x} \triangleq 
\begin{bmatrix}
\frac{\partial f_{11}(\mathbf{X})}{\partial x} & \cdots & \frac{\partial f_{1n}(\mathbf{X})}{\partial x}\\
\vdots & & \vdots \\
\frac{\partial f_{m1}(\mathbf{X})}{\partial x} & \cdots & \frac{\partial f_{mn}(\mathbf{X})}{\partial x}
\end{bmatrix}
\label{DerMatrixFunctionScalar}
\end{equation}
for \(x \in \mathbb{R}\).

\subsection{Properties of Kronecker product and matrix calculus}
The matrix operations related to Kronecker product are listed below and are adopted from \cite{brewer1978kronecker}. The matrices in this subsection have the following dimension: \(\mathbf{A}: \mathbb{R}^{s \times t} \rightarrow \mathbb{R}^{p \times q}\), \(\mathbf{B} \in \mathbb{R}^{s \times t}\), and \(\mathbf{C}: \mathbb{R}^{s \times t} \rightarrow \mathbb{R}^{q \times r}\).
\begin{equation}
\begin{aligned}
\frac{\partial \mathbf{A}(\mathbf{B})\mathbf{C}(\mathbf{B})}{\partial \mathbf{B}} =& \frac{\partial \mathbf{A}(\mathbf{B})}{\partial \mathbf{B}}(\mathbf{I}_{t}\otimes \mathbf{C}(\mathbf{B})) + (\mathbf{I}_{s}\otimes \mathbf{A}(\mathbf{B}))\frac{\partial \mathbf{C}(\mathbf{B})}{\partial \mathbf{B}}.
\end{aligned}
\label{T2}
\end{equation}
\begin{equation}
\frac{\partial \mathbf{A}}{\partial \mathbf{A}} = \mathbf{\bar{U}}_{p\times q}.
\label{T3}
\end{equation}
\begin{equation}
\frac{\partial \mathbf{A}^T}{\partial \mathbf{A}} = \mathbf{U}_{p\times q}.
\label{T4}
\end{equation}
The equations (\ref{T2}), (\ref{T3}), and (\ref{T4}) are employed in deriving the derivative of the loss function with respect to the weight matrix of GCN.

\section{Back-propagation of Graph Convolutional Network}


A basic graph structure is defined as:
\begin{equation}
G = (V,E),
\end{equation}
where \(|V|=n\) is the number of nodes in the graph and \(|E|=n_{e}\) is the number of edges. Denoting \(v_i \in V\) as a node and \(e_{ij} = (v_i, v_j) \in E\) as an edge pointing from \(v_i\) to \(v_j\), the adjacency matrix, \(\mathbf{A} \in {\{0,1\}}^{n \times n}\), is an \(n \times n\) matrix where \(a_{ij}\) equals \(1\) if the edge \(e_{ij}\) exists, and \(a_{ij}\) equals \(0\) if \(e_{ij}\) does not belong to \(E\); in addition, a graph may possess node attributes represented by the matrix \(\mathbf{H}_{0} \in \mathbb{R}^{n \times n_0}\), where \(\mathbf{h_{0_v}} \in \mathbb{R}^{n_0}\) is the feature vector of a node \(v\) with \(n_0\) features \cite{wu2020comprehensive}.


\section{Proof of theorem \ref{thm:arbact}}
\label{ProofTheorem}
Let \(\mathbf{F}(\mathbf{W})=[f_{ij}(\mathbf{W})]_{m \times n}: \mathbb{R}^{p \times q} \rightarrow \mathbb{R}^{m \times n}\) be a \(m\times n\) multivariate matrix-valued function of a \(p\times q\) matrix \(\mathbf{W}\in\mathbb{R}^{p \times q}\), from definition \ref{def:elemact}, \(\mathbf{\Sigma}(\mathbf{F}(\mathbf{W}))\) maps a \(m\times n\) matrix \(\mathbf{F}(\mathbf{W})\) to a \(m\times n\) matrix \(\mathbf{\Sigma}(\mathbf{F}(\mathbf{W}))\). From (\ref{DerMatrixFunction}), the derivative of \(\mathbf{\Sigma}(\mathbf{F}(\mathbf{W}))\) \((m\times n)\) with respect to \(\mathbf{W}\) \((p\times q)\) can be written as:
\begin{equation}
\frac{\partial \mathbf{\Sigma}(\mathbf{F}(\mathbf{W}))}{\partial \mathbf{W}} = 
\begin{bmatrix}
\frac{\partial \mathbf{\Sigma}(\mathbf{F}(\mathbf{W}))}{\partial w_{11}} & \cdots & \frac{\partial \mathbf{\Sigma}(\mathbf{F}(\mathbf{W}))}{\partial w_{1q}}\\
\vdots & & \vdots \\
\frac{\partial \mathbf{\Sigma}(\mathbf{F}(\mathbf{W}))}{\partial w_{p1}} & \cdots & \frac{\partial \mathbf{\Sigma}(\mathbf{F}(\mathbf{W}))}{\partial w_{pq}}
\end{bmatrix}.
\label{ActivationMatrixDerivative}
\end{equation}

Using (\ref{DerMatrixFunctionScalar}), chain rule, and definition \ref{def:derivelemact}, the first element in (\ref{ActivationMatrixDerivative}) is given by
\begin{equation}
\begin{aligned}
&\frac{\partial \mathbf{\Sigma}(\mathbf{F}(\mathbf{W}))}{\partial w_{11}} \\
&= 
\begin{bmatrix}
\frac{\partial \sigma(f_{11}(\mathbf{W}))}{\partial w_{11}} & \cdots & \frac{\partial \sigma(f_{1n}(\mathbf{W}))}{\partial w_{11}}\\
\vdots & & \vdots \\
\frac{\partial \sigma(f_{m1}(\mathbf{W}))}{\partial w_{11}} & \cdots & \frac{\partial \sigma(f_{mn}(\mathbf{W}))}{\partial w_{11}}
\end{bmatrix}\\
&=
\begin{bmatrix}
\begin{smallmatrix}
\sigma'(f_{11}(\mathbf{W}))\frac{\partial f_{11}(\mathbf{W})}{\partial w_{11}} & \cdots & \sigma'(f_{1n}(\mathbf{W}))\frac{\partial f_{1n}(\mathbf{W})}{\partial w_{11}}\\
\vdots & & \vdots \\
\sigma'(f_{m1}(\mathbf{W}))\frac{\partial f_{m1}(\mathbf{W})}{\partial w_{11}} & \cdots & \sigma'(f_{mn}(\mathbf{W}))\frac{\partial f_{mn}(\mathbf{W})}{\partial w_{11}}
\end{smallmatrix}
\end{bmatrix}\\
&= 
\mathbf{\Sigma}'(\mathbf{F}(\mathbf{W})) \odot \frac{\partial \mathbf{F}(\mathbf{W})}{\partial w_{11}} ,
\end{aligned}
\label{dsFWdw11}
\end{equation}
where \(\mathbf{\Sigma}'(\mathbf{F}(\mathbf{W}))\) is defined as
\begin{equation}
\begin{aligned}
\mathbf{\Sigma}'(\mathbf{F}(\mathbf{W})) &= 
\begin{bmatrix}
\sigma'(f_{11}(\mathbf{W})) & \cdots & \sigma'(f_{1n}(\mathbf{W}))\\
\vdots & & \vdots \\
\sigma'(f_{m1}(\mathbf{W})) & \cdots & \sigma'(f_{mn}(\mathbf{W}))
\end{bmatrix}\\
\end{aligned}.
\end{equation}
Applying the calculation of (\ref{dsFWdw11}) to all the other elements in (\ref{ActivationMatrixDerivative}), we can get
\begin{equation}
\begin{aligned}
&\frac{\partial \mathbf{\Sigma}(\mathbf{F}(\mathbf{W}))}{\partial \mathbf{W}} \\
&= 
\begin{bmatrix}
\mathbf{\Sigma}'(\mathbf{F}(\mathbf{W})) \odot \frac{\partial \mathbf{F}(\mathbf{W})}{\partial w_{11}} & \cdots & \mathbf{\Sigma}'(\mathbf{F}(\mathbf{W})) \odot \frac{\partial \mathbf{F}(\mathbf{W})}{\partial w_{1q}}\\
\vdots & & \vdots \\
\mathbf{\Sigma}'(\mathbf{F}(\mathbf{W})) \odot \frac{\partial \mathbf{F}(\mathbf{W})}{\partial w_{p1}} & \cdots & \mathbf{\Sigma}'(\mathbf{F}(\mathbf{W})) \odot \frac{\partial \mathbf{F}(\mathbf{W})}{\partial w_{pq}}
\end{bmatrix}\\
&= 
\begin{bmatrix}
\begin{smallmatrix}
\mathbf{\Sigma}'(\mathbf{F}(\mathbf{W})) & \cdots & \mathbf{\Sigma}'(\mathbf{F}(\mathbf{W}))\\
\vdots & & \vdots \\
\mathbf{\Sigma}'(\mathbf{F}(\mathbf{W})) & \cdots & \mathbf{\Sigma}'(\mathbf{F}(\mathbf{W}))
\end{smallmatrix}
\end{bmatrix}
\odot
\begin{bmatrix}
\begin{smallmatrix}
\frac{\partial \mathbf{F}(\mathbf{W})}{\partial w_{11}} & \cdots & \frac{\partial \mathbf{F}(\mathbf{W})}{\partial w_{1q}}\\
\vdots & & \vdots \\
\frac{\partial \mathbf{F}(\mathbf{W})}{\partial w_{p1}} & \cdots & \frac{\partial \mathbf{F}(\mathbf{W})}{\partial w_{pq}}
\end{smallmatrix}
\end{bmatrix}\\
&=
(\mathbf{J}_{p \times q} \otimes \mathbf{\Sigma}'(\mathbf{F}(\mathbf{W}))) \odot \frac{\partial \mathbf{F}(\mathbf{W})}{\partial \mathbf{W}}.
\end{aligned}
\label{ActivationMatrixDerivative2}
\end{equation}

\section{Back-propagation for multi-layer GCN with arbitrary activation functions}
\label{DerLink}

We consider a \(d\)-layer GCN defined as
\begin{equation}
\mathbf{\hat{Y}} = \mathbf{\Sigma}_{d+1}(\mathbf{H}_{d}\mathbf{H}_{d}^{T}),
\label{def2GCNlink}
\end{equation}
where 
\begin{equation}
\mathbf{H}_{d} = \mathbf{\Sigma}_{d}(\mathbf{\hat{A}}\mathbf{H}_{d-1}\mathbf{W}_{d}),
\label{Hdlink}
\end{equation}
\(d\in \mathbb{Z}^+\), \(\mathbf{\hat{A}}=\mathbf{\tilde{D}}^{-\frac{1}{2}}(\mathbf{A}+\mathbf{I}_{n})\mathbf{\tilde{D}}^{-\frac{1}{2}} \in\mathbb{R}^{n \times n}\) represents the \(n \times n\) normalized adjacency matrix \cite{yun2021neo}, \(\mathbf{\tilde{D}} \in\mathbb{R}^{n \times n}\) is a degree matrix defined by \(\tilde{d}_{ii} =1+\sum_{j} a_{ij}\), \(\mathbf{A} \in {\{0,1\}}^{n \times n}\) is an \(n \times n\) adjacency matrix whose diagonal elements are all equal to zero), \(\mathbf{H}_{d} \in \mathbb{R}^{n \times n_{d}}\) is the feature matrix  for \(n\) nodes with \(n_{d}\) features, \(\mathbf{\hat{Y}} \in \mathbb{R}^{n \times n_{d}}\) is the output matrix, and \(\mathbf{W}_{1} \in \mathbb{R}^{n_{0} \times n_{1}}\), \(\mathbf{W}_{2} \in \mathbb{R}^{n_{1} \times n_{2}}\), \(\cdots\), and \(\mathbf{W}_{d} \in \mathbb{R}^{n_{d-1} \times n_{d}}\) are trainable parameter matrices, and \(\mathbf{\Sigma}_{1}\), \(\mathbf{\Sigma}_{2}\), \(\cdots\), and \(\mathbf{\Sigma}_{d+1}\) are any element-wise activation function.

The loss function of the GCN for link prediction can be defined as 
\begin{equation}
L = -\sum_{(i,j)\in E} ln(\hat{y}_{ij})-\sum_{(i,j)\in S} ln(1-\hat{y}_{ij}),
\label{Linkloss}
\end{equation}
where \(y_{ij}\) denotes the element in the \(i\)th row and \(j\)th column of the training matrix \(\mathbf{Y} \in \mathbb{R}^{n \times n_{d}}\), \(\hat{y}_{ij}\) denotes the element in the \(i\)th row and \(j\)th column of \(\mathbf{\hat{Y}}\), and \(S\) is a set of edges that contains \(n_e\) negative edges \((\bar{v}_i,\bar{v}_j)\in S\) randomly sampled from \(E^c\).

The derivative of the loss in (\ref{Linkloss}) with respect to the \(s\)th weight matrix \(\mathbf{W}_{s}\) is
\begin{equation}
\frac{\partial L}{\partial \mathbf{W}_{s}} =-\sum_{(i,j)\in E} \frac{1}{\hat{y}_{ij}}\frac{\partial \hat{y}_{ij}}{\partial \mathbf{W}_{s}}+\sum_{(i,j)\in S} \frac{1}{1-\hat{y}_{ij}}\frac{\partial \hat{y}_{ij}}{\partial \mathbf{W}_{s}}, \label{2-2-1-E1-1}
\end{equation}
where \(s\in \mathbb{Z}^+\), \(s\leq d\), 
\begin{align}
&\frac{\partial \hat{y}_{ij}}{\partial \mathbf{W}_{s}} \nonumber \\
&= \mathbf{\Sigma}_{d+1}'(\mathbf{H}_{d_{i*}}\mathbf{H}_{d_{j*}}^T)\frac{\partial \mathbf{H}_{d_{i*}}\mathbf{H}_{d_{j*}}^T}{\partial \mathbf{W}_{s}} \label{2-2-1-E2-1} \\
&=\mathbf{\Sigma}_{d+1}'(\mathbf{H}_{d_{i*}}\mathbf{H}_{d_{j*}}^T) \cdot\frac{\partial \mathbf{\Sigma}_{d}(\mathbf{\hat{A}}_{i*}\mathbf{H}_{d-1}\mathbf{W}_{d})\mathbf{\Sigma}_{d}(\mathbf{\hat{A}}_{j*}\mathbf{H}_{d-1}\mathbf{W}_{d})^T}{\partial \mathbf{W}_{s}} \label{2-2-1-E2-2} \\
&=\mathbf{\Sigma}_{d+1}'(\mathbf{H}_{d_{i*}}\mathbf{H}_{d_{j*}}^T) \cdot(\frac{\partial \mathbf{\Sigma}_{d}(\mathbf{\hat{A}}_{i*}\mathbf{H}_{d-1}\mathbf{W}_{d})}{\partial \mathbf{W}_{s}} (\mathbf{I}_{n_s}\otimes\mathbf{\Sigma}_{d}(\mathbf{\hat{A}}_{j*}\mathbf{H}_{d-1}\mathbf{W}_{d})^T) \nonumber\\
&\quad+(\mathbf{I}_{n_{s-1}}\otimes\mathbf{\Sigma}_{d}(\mathbf{\hat{A}}_{i*}\mathbf{H}_{d-1}\mathbf{W}_{d})) \frac{\partial \mathbf{\Sigma}_{d}(\mathbf{\hat{A}}_{j*}\mathbf{H}_{d-1}\mathbf{W}_{d})^T}{\partial \mathbf{W}_{s}}) \label{2-2-1-E2-3} \\
&=\mathbf{\Sigma}_{d+1}'(\mathbf{H}_{d_{i*}}\mathbf{H}_{d_{j*}}^T) \nonumber \\
&\quad\cdot(((\mathbf{J}_{n_{s-1}\times n_s}\otimes\mathbf{\Sigma}_{d}'(\mathbf{\hat{A}}_{i*}\mathbf{H}_{d-1}\mathbf{W}_{d})) \odot\frac{\partial \mathbf{\hat{A}}_{i*}\mathbf{H}_{d-1}\mathbf{W}_{d}}{\partial \mathbf{W}_{s}})\cdot(\mathbf{I}_{n_s}\otimes\mathbf{\Sigma}_{d}(\mathbf{\hat{A}}_{j*}\mathbf{H}_{d-1}\mathbf{W}_{d})^T) \nonumber\\
&\quad+(\mathbf{I}_{n_{s-1}}\otimes\mathbf{\Sigma}_{d}(\mathbf{\hat{A}}_{i*}\mathbf{H}_{d-1}\mathbf{W}_{d})) \cdot((\mathbf{J}_{n_{s-1}\times n_s}\otimes\mathbf{\Sigma}_{d}'(\mathbf{\hat{A}}_{j*}\mathbf{H}_{d-1}\mathbf{W}_{d})^T)  \odot\frac{\partial (\mathbf{\hat{A}}_{j*}\mathbf{H}_{d-1}\mathbf{W}_{d})^T}{\partial \mathbf{W}_{s}})), \label{2-2-1-E2-4}
\end{align}
(\ref{2-2-1-E2-1}) follows from chain rule and definition \ref{def:derivelemact}, (\ref{2-2-1-E2-2}) follows from (\ref{Hdlink}), (\ref{2-2-1-E2-3}) follows from (\ref{T2}), (\ref{2-2-1-E2-4}) follows from theorem (\ref{thm:arbact}), 
\begin{align}
\frac{\partial \mathbf{\hat{A}}_{i*}\mathbf{H}_{d-1}\mathbf{W}_{d}}{\partial \mathbf{W}_{s}}=\begin{cases}
&\begin{aligned}
&(\mathbf{I}_{n_{s-1}} \otimes(\mathbf{\hat{A}}_{i*}\mathbf{H}_{d-1}))\mathbf{\bar{U}}_{n_{s-1}\times n_s},
\end{aligned}\quad \text{if $s=d$}\\
&\begin{aligned}
&(\mathbf{I}_{n_{s-1}} \otimes\mathbf{\hat{A}}_{i*})\frac{\partial \mathbf{H}_{d-1}}{\partial \mathbf{W}_{s}}(\mathbf{I}_{n_s}\otimes \mathbf{W}_{d}),
\end{aligned}\quad \text{if $s<d$},
\end{cases}
\label{2-2-1-E3}
\end{align}
\begin{align}
\frac{\partial (\mathbf{\hat{A}}_{j*}\mathbf{H}_{d-1}\mathbf{W}_{d})^T}{\partial \mathbf{W}_{s}} =\begin{cases}
&\begin{aligned}
&\mathbf{U}_{n_{s-1}\times n_s}(\mathbf{I}_{n_{s}} \otimes(\mathbf{\hat{A}}_{j*}\mathbf{H}_{d-1})^T),
\end{aligned}\quad \text{if $s=d$}\\
&\begin{aligned}
&(\mathbf{I}_{n_{s-1}} \otimes\mathbf{W}_{d}^T)\frac{\partial \mathbf{H}_{d-1}^T}{\partial \mathbf{W}_{s}}(\mathbf{I}_{n_s}\otimes \mathbf{\hat{A}}_{j*}^T),
\end{aligned}\quad \text{if $s<d$},
\end{cases}
\label{2-2-1-E4}
\end{align}
(\ref{2-2-1-E3}) follows from (\ref{T2}) and (\ref{T3}), (\ref{2-2-1-E4}) follows from (\ref{T2}) and (\ref{T4}), \(\frac{\partial \mathbf{H}_{d-1}}{\partial \mathbf{W}_{s}}\) is similar to (\ref{2-1-4-E3}) but with \(\mathbf{A}=\mathbf{\hat{A}}\), 
\begin{equation}
\begin{aligned}
&\frac{\partial \mathbf{H}_{d-1}^T}{\partial \mathbf{W}_{s}} \\
&=\begin{cases}
&\begin{aligned}
&(\mathbf{J}_{n_{s-1} \times n_s}\otimes \mathbf{\Sigma}_{d-1}'(\mathbf{\hat{A}}\mathbf{H}_{d-2}\mathbf{W}_{d-1})^T) \odot (\mathbf{U}_{n_{s-1}\times n_s}(\mathbf{I}_{n_{s}} \otimes(\mathbf{\hat{A}}\mathbf{H}_{d-2})^T)),
\end{aligned} \\
&\text{if $s=d-1$}\\
&\begin{aligned}
&(\mathbf{J}_{n_{s-1} \times n_s}\otimes \mathbf{\Sigma}_{d-1}'(\mathbf{\hat{A}}\mathbf{H}_{d-2}\mathbf{W}_{d-1})^T) \odot ((\mathbf{I}_{n_{s-1}} \otimes\mathbf{W}_{d-1}^T)\frac{\partial \mathbf{H}_{d-2}^T}{\partial \mathbf{W}_{s}}\cdot (\mathbf{I}_{n_s} \otimes\mathbf{\hat{A}}^T)),
\end{aligned} \\
&\text{if $s< d-1$},
\end{cases}
\end{aligned}
\label{2-2-1-E5}
\end{equation}
and (\ref{2-2-1-E5}) follows from (\ref{T2}), (\ref{T4}), and theorem (\ref{thm:arbact}).

\section{Sensitivity analysis}
\subsection{Sensitivity of loss with respect to feature matrix}
\label{sens_node}

The expression of the sensitivity of loss with respect to the feature matrix \(\mathbf{H}_{0}\) is similar to the result in section \ref{BPMLGCNAAF_NC} except that we change the dimension of the identity matrix, the all-one matrix, and the permutation related matrix, and we change the result in (\ref{2-1-3-E2}) and (\ref{2-1-4-E3}) for the last layer. Thus, the derivative, or sensitivity, of the loss in (\ref{nodeloss-4}) with respect to the input feature matrix \(\mathbf{H}_{0} \in \mathbb{R}^{n\times n_0}\) is 
\begin{align}
\frac{\partial L}{\partial \mathbf{H}_{0}} 
&=-\sum_{i=1}^{n} \sum_{j=1}^{n_{d}} (\frac{y_{ij}-\hat{y}_{ij}}{\hat{y}_{ij}(1-\hat{y}_{ij})}\mathbf{\Sigma}_{d+1}'(h_{d_{ij}}) \cdot((\mathbf{J}_{n\times n_0}\otimes \mathbf{\Sigma}_{d}'(\mathbf{A}_{i*}\mathbf{H}_{d-1}\mathbf{W}_{d_{*j}})) \nonumber \\
&\qquad\qquad\odot \frac{\partial \mathbf{A}_{i*}\mathbf{H}_{d-1}\mathbf{W}_{d_{*j}}}{\partial \mathbf{H}_{0}})) , 
\end{align}
where \(\frac{\partial \mathbf{A}_{i*}\mathbf{H}_{d-1}\mathbf{W}_{d_{*j}}}{\partial \mathbf{H}_{0}}\) is the defined as
\begin{equation}
\begin{aligned}
&\frac{\partial \mathbf{A}_{i*}\mathbf{H}_{d-1}\mathbf{W}_{d_{*j}}}{\partial \mathbf{H}_{0}} =\begin{cases}
&\begin{aligned}
&(\mathbf{I}_{n} \otimes\mathbf{A}_{i*})\cdot\mathbf{\bar{U}}_{n\times n_0}(\mathbf{I}_{n_0}\otimes \mathbf{W}_{d_{*j}})
\end{aligned}\quad \text{if $d=1$}\\
&\begin{aligned}
&(\mathbf{I}_{n} \otimes\mathbf{A}_{i*})\cdot\frac{\partial \mathbf{H}_{d-1}}{\partial \mathbf{H}_{0}}(\mathbf{I}_{n_0}\otimes \mathbf{W}_{d_{*j}})
\end{aligned}\quad \text{if $d>1$}
\end{cases}
\end{aligned},
\end{equation}
and
\begin{equation}
\begin{aligned}
&\frac{\partial \mathbf{H}_{d-1}}{\partial \mathbf{H}_{0}} \\
&=\begin{cases}
&\begin{aligned}
&(\mathbf{J}_{n\times n_0}\otimes \mathbf{\Sigma}_{d-1}'(\mathbf{A}\mathbf{H}_{d-2}\mathbf{W}_{d-1})) \odot ((\mathbf{I}_{n} \otimes\mathbf{A})\mathbf{\bar{U}}_{n\times n_0}(\mathbf{I}_{n_0} \otimes\mathbf{W}_{d-1}))
\end{aligned}\quad \\
&\text{if $d-1=1$}\\
&\begin{aligned}
&(\mathbf{J}_{n\times n_0}\otimes \mathbf{\Sigma}_{d-1}'(\mathbf{A}\mathbf{H}_{d-2}\mathbf{W}_{d-1})) \odot ((\mathbf{I}_{n} \otimes\mathbf{A})\frac{\partial \mathbf{H}_{d-2}}{\partial \mathbf{H}_{0}}(\mathbf{I}_{n_0} \otimes\mathbf{W}_{d-1}))
\end{aligned}\quad \\
&\text{if $d-1>1$}.
\end{cases}
\end{aligned}
\label{EC-1-72}
\end{equation} 

\subsection{Sensitivity of output with respect to feature matrix}
\label{sens_link}

Similarly, the derivative, or sensitivity, of each element of the output in (\ref{def2GCNlink}) with respect to the input feature matrix \(\mathbf{H}_{0} \in \mathbb{R}^{n\times n_0}\) is

\begin{align}
&\frac{\partial \hat{y}_{ij}}{\partial \mathbf{H}_{0}} \nonumber \\
&=\mathbf{\Sigma}_{d+1}'(\mathbf{H}_{d_{i*}}\mathbf{H}_{d_{j*}}^T) \nonumber \\
&\quad\cdot(((\mathbf{J}_{n\times n_0}\otimes\mathbf{\Sigma}_{d}'(\mathbf{\hat{A}}_{i*}\mathbf{H}_{d-1}\mathbf{W}_{d})) \odot\frac{\partial \mathbf{\hat{A}}_{i*}\mathbf{H}_{d-1}\mathbf{W}_{d}}{\partial \mathbf{H}_{0}})\cdot(\mathbf{I}_{n_0}\otimes\mathbf{\Sigma}_{d}(\mathbf{\hat{A}}_{j*}\mathbf{H}_{d-1}\mathbf{W}_{d})^T) \nonumber\\
&\quad+(\mathbf{I}_{n}\otimes\mathbf{\Sigma}_{d}(\mathbf{\hat{A}}_{i*}\mathbf{H}_{d-1}\mathbf{W}_{d})) \cdot((\mathbf{J}_{n\times n_0}\otimes\mathbf{\Sigma}_{d}'(\mathbf{\hat{A}}_{j*}\mathbf{H}_{d-1}\mathbf{W}_{d})^T)  \odot\frac{\partial (\mathbf{\hat{A}}_{j*}\mathbf{H}_{d-1}\mathbf{W}_{d})^T}{\partial \mathbf{H}_{0}})), 
\end{align}
where
\begin{align}
\frac{\partial \mathbf{\hat{A}}_{i*}\mathbf{H}_{d-1}\mathbf{W}_{d}}{\partial \mathbf{H}_{0}}=\begin{cases}
&\begin{aligned}
&(\mathbf{I}_{n} \otimes\mathbf{\hat{A}}_{i*})\mathbf{\bar{U}}_{n\times n_0}(\mathbf{I}_{n_0}\otimes \mathbf{W}_{d}),
\end{aligned}\quad \text{if $d=1$}\\
&\begin{aligned}
&(\mathbf{I}_{n} \otimes\mathbf{\hat{A}}_{i*})\frac{\partial \mathbf{H}_{d-1}}{\partial \mathbf{H}_{0}}(\mathbf{I}_{n_0}\otimes \mathbf{W}_{d}),
\end{aligned}\quad \text{if $d>1$},
\end{cases}
\end{align}
\begin{align}
\frac{\partial (\mathbf{\hat{A}}_{j*}\mathbf{H}_{d-1}\mathbf{W}_{d})^T}{\partial \mathbf{H}_{0}} =\begin{cases}
&\begin{aligned}
&(\mathbf{I}_{n} \otimes\mathbf{W}_{d}^T)\mathbf{U}_{n\times n_0}(\mathbf{I}_{n_{0}} \otimes(\mathbf{\hat{A}}_{j*}\mathbf{H}_{d-1})^T),
\end{aligned}\quad \text{if $d=1$}\\
&\begin{aligned}
&(\mathbf{I}_{n} \otimes\mathbf{W}_{d}^T)\frac{\partial \mathbf{H}_{d-1}^T}{\partial \mathbf{H}_{0}}(\mathbf{I}_{n_0}\otimes \mathbf{\hat{A}}_{j*}^T),
\end{aligned}\quad \text{if $d>1$},
\end{cases}
\end{align}
\(\frac{\partial \mathbf{H}_{d-1}}{\partial \mathbf{H}_{0}}\) is similar to (\ref{EC-1-72}) but with \(\mathbf{A}=\mathbf{\hat{A}}\), and
\begin{equation}
\begin{aligned}
&\frac{\partial \mathbf{H}_{d-1}^T}{\partial \mathbf{H}_{0}} \\
&=\begin{cases}
&\begin{aligned}
&(\mathbf{J}_{n\times n_0}\otimes \mathbf{\Sigma}_{d-1}'(\mathbf{\hat{A}}\mathbf{H}_{d-2}\mathbf{W}_{d-1})^T) \odot ((\mathbf{I}_{n} \otimes\mathbf{W}_{d-1}^T)\mathbf{U}_{n\times n_0}(\mathbf{I}_{n_0} \otimes\mathbf{\hat{A}}^T)),
\end{aligned} \\
&\text{if $d-1=1$}\\
&\begin{aligned}
&(\mathbf{J}_{n\times n_0}\otimes \mathbf{\Sigma}_{d-1}'(\mathbf{\hat{A}}\mathbf{H}_{d-2}\mathbf{W}_{d-1})^T) \odot ((\mathbf{I}_{n} \otimes\mathbf{W}_{d-1}^T)\frac{\partial \mathbf{H}_{d-2}^T}{\partial \mathbf{H}_{0}}\cdot (\mathbf{I}_{n_0} \otimes\mathbf{\hat{A}}^T)),
\end{aligned} \\
&\text{if $d-1>1$}.
\end{cases}
\end{aligned}
\end{equation}

\section{Data description}

\begin{figure}[!t]
    \includegraphics[width=\linewidth]{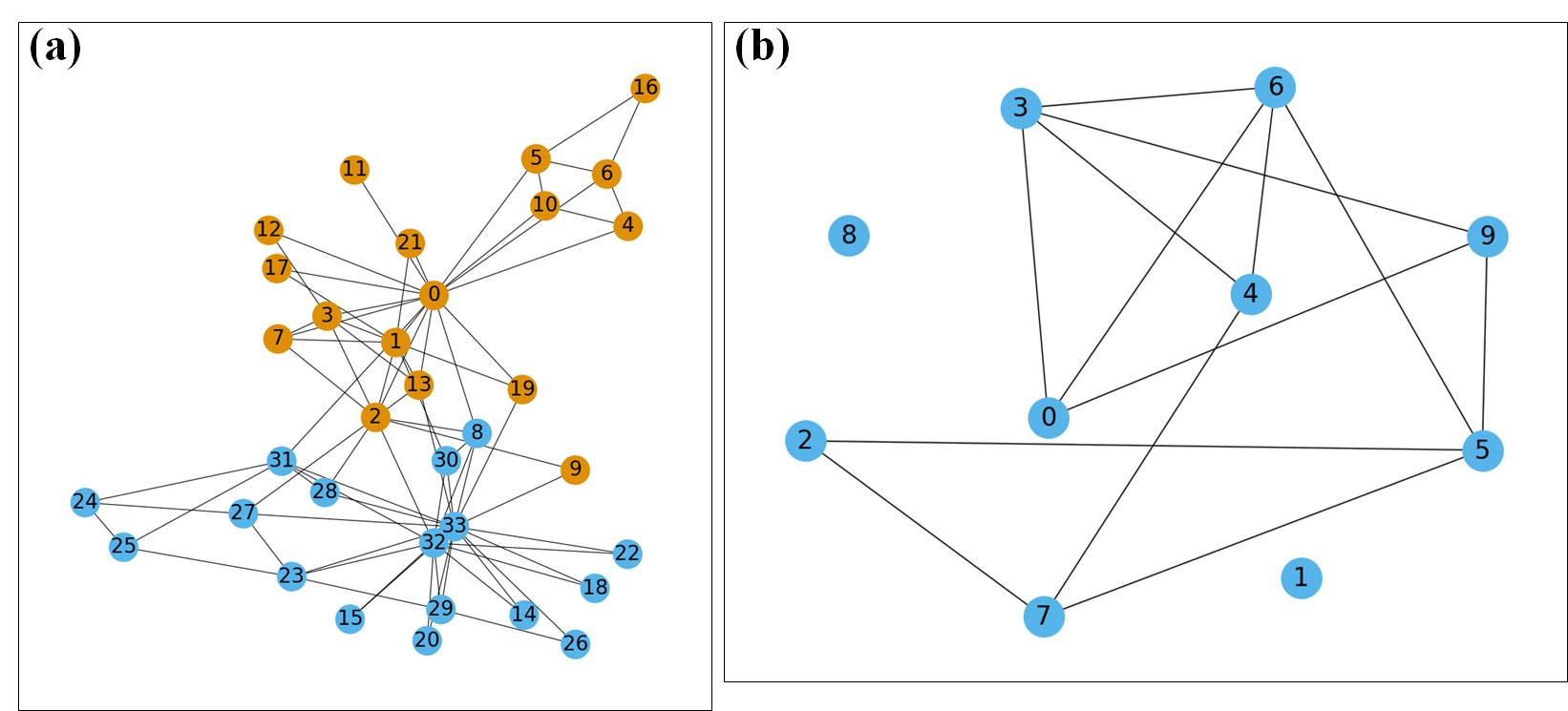}
        \caption{(\textbf{a}) Zachary’s karate club social network \cite{zachary1977information}. The node colors signify classes, with blue representing class 0 and orange representing class 1. (\textbf{b}) Drug-drug interaction network. Black links between each node represent graph edges.}
        \label{a-Karate_Drug}
\end{figure}

\subsection{Node classification}
\label{DataNode}

We tested our method for node classification on Zachary’s Karate Club \cite{zachary1977information}. Zachary’s Karate Club is a social network consisting of 34 members of a karate club, where undirected edges represent friendships, as shown in Figure~\ref{a-Karate_Drug} (\textbf{a}) \cite{moore2011active}. Every node was labeled by one of four classes obtained via modularity-based clustering in \cite{kipf2016semi} and we set class 2 to class 0 and class 3 to class 1 for binary node classification. The goal is to classify the nodes into the correct class.

\subsection{Link prediction}
\label{DataLink}

Drug-Drug Interaction (DDI) commonly arises when multiple drugs that target the same receptors are administered concurrently. Such interactions can result in diminished therapeutic efficacy by hindering drug absorption and amplifying adverse drug reactions, which may lead to unforeseen off-target side effects \cite{rodrigues2019drug}. We tested our method for link prediction using a DDI network, which was formed by extracting data from the DrugBank database \cite{wishart2018drugbank}. This network revolved around drugs of interest, identified through the analysis of differentially expressed genes from RNA-sequence data within the GSE147507 dataset, particularly in the context of a COVID-19 study \cite{blanco2020imbalanced}. Interactions between drugs and proteins were established using DrugBank's available drug-protein interactions, serving as communication pathways. The most disrupted signaling pathway, "Herpes simplex virus 1 infection," comprised 495 proteins and was selected based on the analysis of differentially expressed genes and KEGG pathway database \cite{kanehisa2021kegg}. Subsequently, 214 drugs were extracted from DrugBank for their direct interactions with the 495 proteins of interest. In total, 468 drugs (including the aforementioned 214) with 48,460 DDIs were sourced from DrugBank, capable of interacting with the 214 drugs. The DDI network was constructed using these 468 drugs. For this study, the first 100 drugs, based on their DrugBank index, were extracted to form a manageable subset for analysis.

The drug features analyzed in this study encompassed 20 molecular properties such as weight and water solubility. Furthermore, pairwise chemical similarities between drugs were computed using the Jaccard coefficient, quantifying structural similarity based on Simplified Molecular Input Line Entry System (SMILES) strings \cite{weininger1988smiles}. Principal Components Analysis was used and the first 20 (out of 488) principal components, which explained 98\% of variability in drug feature data, were used as input features to GCN.

\section{The evolution of binary node classification and link prediction}

\subsubsection{5-layer GCN with identity function and sigmoid activation function for binary node classification}
\label{Node5GCN}

The \(5\)-layer GCN is in the form of (\ref{dGCNnode}) and the loss function is defined by (\ref{nodeloss-4}), where \(d=5\) is the number of layers, \(\mathbf{H}_{0} = \mathbf{I}_{n} \in \mathbb{R}^{n \times n}\) is the feature matrix for \(n=34\) nodes with \(n=34\) features, the matrix \(\mathbf{Y} \in \mathbb{R}^{n \times n_{d}}\) represents the ground truth of nodes, \(\mathbf{\hat{Y}} \in \mathbb{R}^{n \times n_{d}}\) denotes the final layer predictions of the GCN model, \(n_d=1\) for binary node classification, \(\mathbf{W}_{1} \in \mathbb{R}^{n_{0} \times n_{1}}\), \(\mathbf{W}_{2} \in \mathbb{R}^{n_{1} \times n_{2}}\), \(\mathbf{W}_{3} \in \mathbb{R}^{n_{2} \times n_{3}}\), \(\mathbf{W}_{4} \in \mathbb{R}^{n_{3} \times n_{4}}\), and \(\mathbf{W}_{5} \in \mathbb{R}^{n_{4} \times n_{d}}\) are trainable parameter matrices, \(n_0=20\), \(n_1=2\), \(n_2=3\), \(n_3=2\), \(n_4=3\), \(\mathbf{\Sigma}_{1}\) is an element-wise ReLU function \cite{nair2010rectified}, \(\mathbf{\Sigma}_{2}\) is an element-wise sigmoid linear Unit (SiLU) function \cite{hendrycks2016gaussian}, \(\mathbf{\Sigma}_{3}\) is an element-wise exponential linear unit (ELU) function with \(\alpha=1\) \cite{clevert2015fast}, \(\mathbf{\Sigma}_{4}\) is an element-wise Leaky ReLU function \cite{maas2013rectifier}, \(\mathbf{\Sigma}_{5}\) is an element-wise identity function, and \(\mathbf{\Sigma}_{6}\) is an element-wise sigmoid function.

The GCN is trained using SGD with a learning rate of \(3\times10^{-5}\) and with 10 iterations for both methods. The GCN is trained 1060 times and the weight matrices are reinitialized every time. If the calculation of the loss function has nan value, the training at this time will be skipped and will not be counted. The line plot of the SSE of the five weight matrices, \(\mathbf{W}_{1}\), \(\mathbf{W}_{2}\), \(\mathbf{W}_{3}\), \(\mathbf{W}_{4}\), and \(\mathbf{W}_{5}\), between the reverse mode automatic differentiation and our matrix-based method is shown in Figure~\ref{node_dW} (\textbf{a}). The box plot of the \(log_{10}\) of the SSE in Figure~\ref{node_dW} (\textbf{b}) shows that the median of the five weight matrices is between \(10^{-18}\) and \(10^{-14}\). In the box plot of Figure~\ref{node_dW} (\textbf{b}), the box extends from the first quartile to the third quartile of the data, with a line at the median, while the horizontal lines extending from the box, which are called whiskers, indicating the farthest data point lying within 1.5 times the inter-quartile range from the box, and circles are those points past the end of the whiskers \cite{Hunter:2007}. 

\begin{figure*}[!h]
    \includegraphics[width=\linewidth]{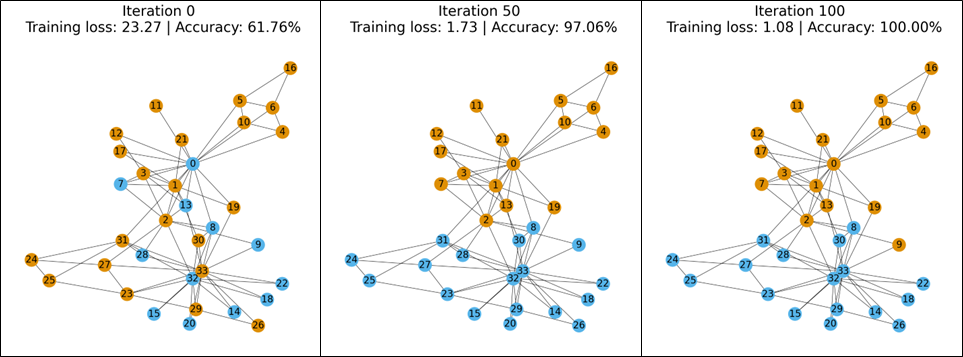}
        \caption{Evolution of karate club network node classification obtained from a 1-layer GCN model after \(100\) training iterations. The training loss, accuracy, and classification results exhibit similar trends when using either reverse-mode automatic differentiation or our matrix-based method. Nodes are represented by circle with the number to distinguish each node. Colors represent classes, where blue represents class 0 and orange represents class 1. Black links between each node represent graph edges.}
        \label{node_evolution}
\end{figure*}

\begin{figure}[!t]
    \includegraphics[width=\linewidth]{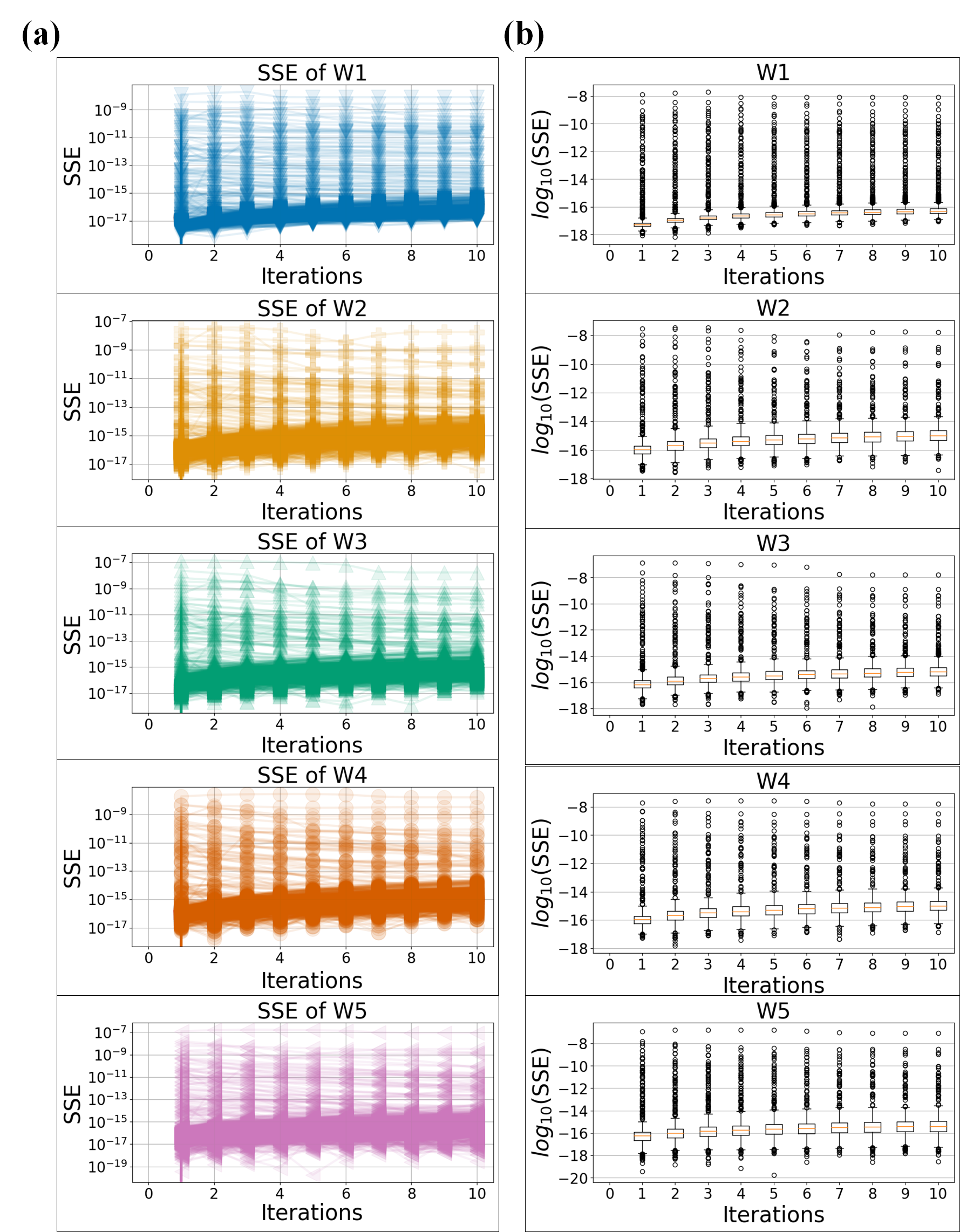}
        \caption{(\textbf{a}) The line plot of the 1060 evolution of the sum of squared error (SSE) between the trainable weight matrices obtained from our method and the matrix obtained using reverse mode automatic differentiation in section \ref{Node5GCN}.  (\textbf{b}) The box plot of the 1060 evolution of the SSE between the trainable weight matrices obtained from our method and the matrix obtained using reverse mode automatic differentiation in section \ref{Node5GCN}. 
        }
        \label{node_dW}
\end{figure}

\subsubsection{5-layer GCN for link prediction}
\label{Link5GCN}

The \(5\)-layer GCN is defined by (\ref{def2GCNlink}) and the loss function is defined by (\ref{Linkloss}), where \(d=5\) is the number of layers, the matrix \(\mathbf{\hat{Y}} \in \mathbb{R}^{n \times n}\) represents ground-truth adjacency matrix, \(\mathbf{\hat{Y}} \in \mathbb{R}^{n \times n}\) denotes the final link predictions of the GCN model, \(\mathbf{W}_{1} \in \mathbb{R}^{n_{0} \times n_{1}}\), \(\mathbf{W}_{2} \in \mathbb{R}^{n_{1} \times n_{2}}\), \(\mathbf{W}_{3} \in \mathbb{R}^{n_{2} \times n_{3}}\), \(\mathbf{W}_{4} \in \mathbb{R}^{n_{3} \times n_{4}}\), and \(\mathbf{W}_{5} \in \mathbb{R}^{n_{4} \times n_{5}}\) are trainable parameter matrices, \(n_0=20\), \(n_1=2\), \(n_2=3\), \(n_3=5\), \(n_4=3\), \(n_5=40\), \(\mathbf{\Sigma}_{1}\) is an element-wise Leaky ReLU function \cite{nair2010rectified}, \(\mathbf{\Sigma}_{2}\) is an element-wise ELU function \cite{hendrycks2016gaussian}, \(\mathbf{\Sigma}_{3}\) is an element-wise SiLU function with \(\alpha=1\) \cite{clevert2015fast}, \(\mathbf{\Sigma}_{4}\) is an element-wise ReLU function \cite{maas2013rectifier}, \(\mathbf{\Sigma}_{5}\) is an element-wise identity function, and \(\mathbf{\Sigma}_{6}\) is an element-wise sigmoid function.

The GCN is trained using SGD with a learning rate of 0.9 and with 10 iterations for both methods. In each iteration, thirteen negative edges are uniformly sampled from a set of all the unconnected edges for the GCN training using the reverse mode automatic differentiation. The sampled negative edges in each iteration are saved in a list and used in the training of the GCN using our matrix-based method. The GCN is trained 1060 times and the weight matrices are reinitialized every time. If the calculation of the loss function has nan value, the training at this time will be skipped and will not be counted. The line plot of the SSE of the five weight matrices, \(\mathbf{W}_{1}\), \(\mathbf{W}_{2}\), \(\mathbf{W}_{3}\), \(\mathbf{W}_{4}\), and \(\mathbf{W}_{5}\), between the reverse mode automatic differentiation and our matrix-based method is shown in Figure~\ref{link_dW} (\textbf{a}). The box plot of the \(log_{10}\) of the SSE in Figure~\ref{link_dW} (\textbf{b}) shows that the median of the five weight matrices is between \(10^{-18}\) and \(10^{-14}\). 

\begin{figure}[!t]
\includegraphics[width=\linewidth]{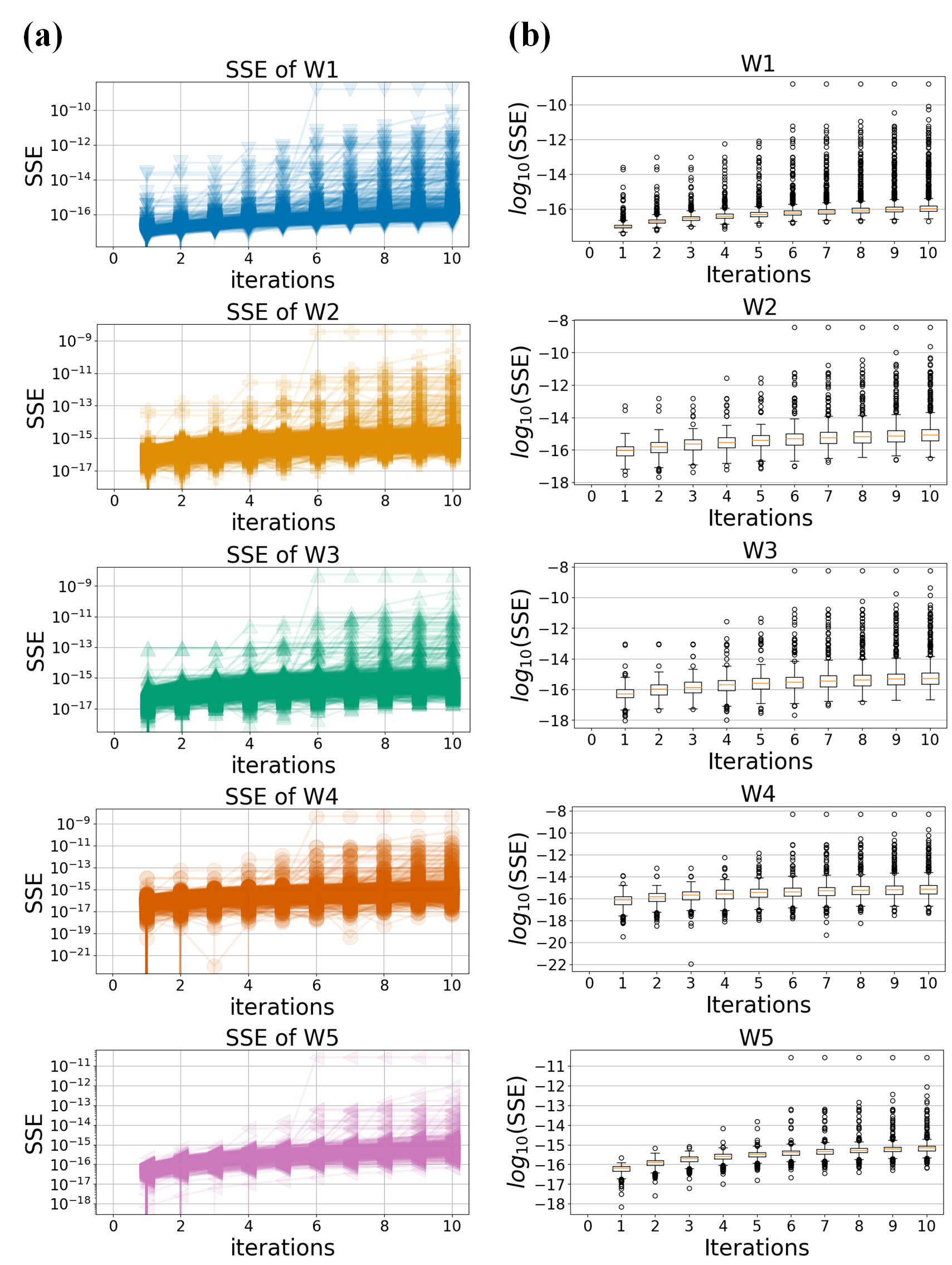}
        \caption{(\textbf{a}) The line plot of the 1060 evolution of the sum of squared error (SSE) between the trainable weight matrices obtained from our method and the matrix obtained using reverse mode automatic differentiation in section \ref{Link5GCN}.  (\textbf{b}) The box plot of the 1060 evolution of the SSE between the trainable weight matrices obtained from our method and the matrix obtained using reverse mode automatic differentiation in section \ref{Link5GCN}. 
        }
        \label{link_dW}
\end{figure}

\begin{figure*}[!t]
    \includegraphics[width=\linewidth]{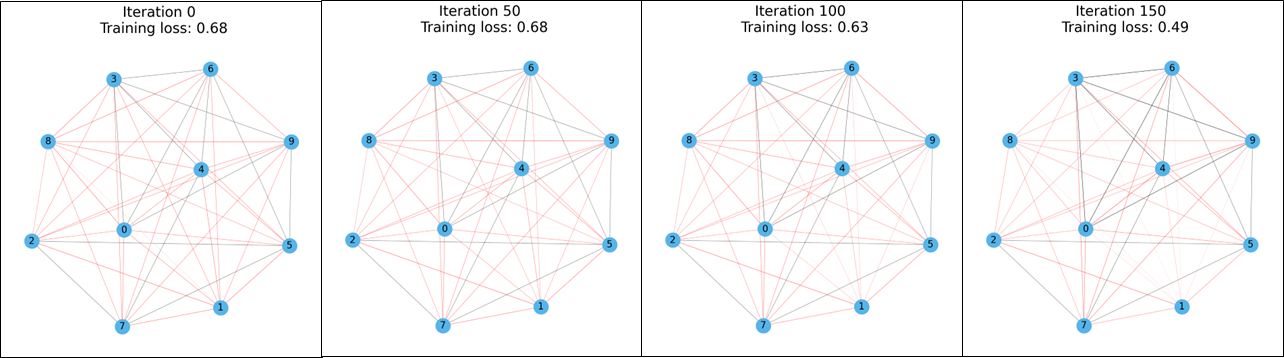}
        \caption{Evolution of DDI network node classification obtained from a 2-layer GCN model after \(150\) training iterations. The training loss and classification results exhibit similar trends when using either reverse-mode automatic differentiation or our matrix-based method. Nodes are represented by circle with the number to distinguish each node. Black links between each node represent truth graph edges. Red links between each node represent the edges that does not exist in the ground truth adjacency matrix. The transparency of the black and red link is proportional to the predicted link probability.}
        \label{link_evolution}
\end{figure*}

\begin{figure}[!t]
    \includegraphics[width=\linewidth]{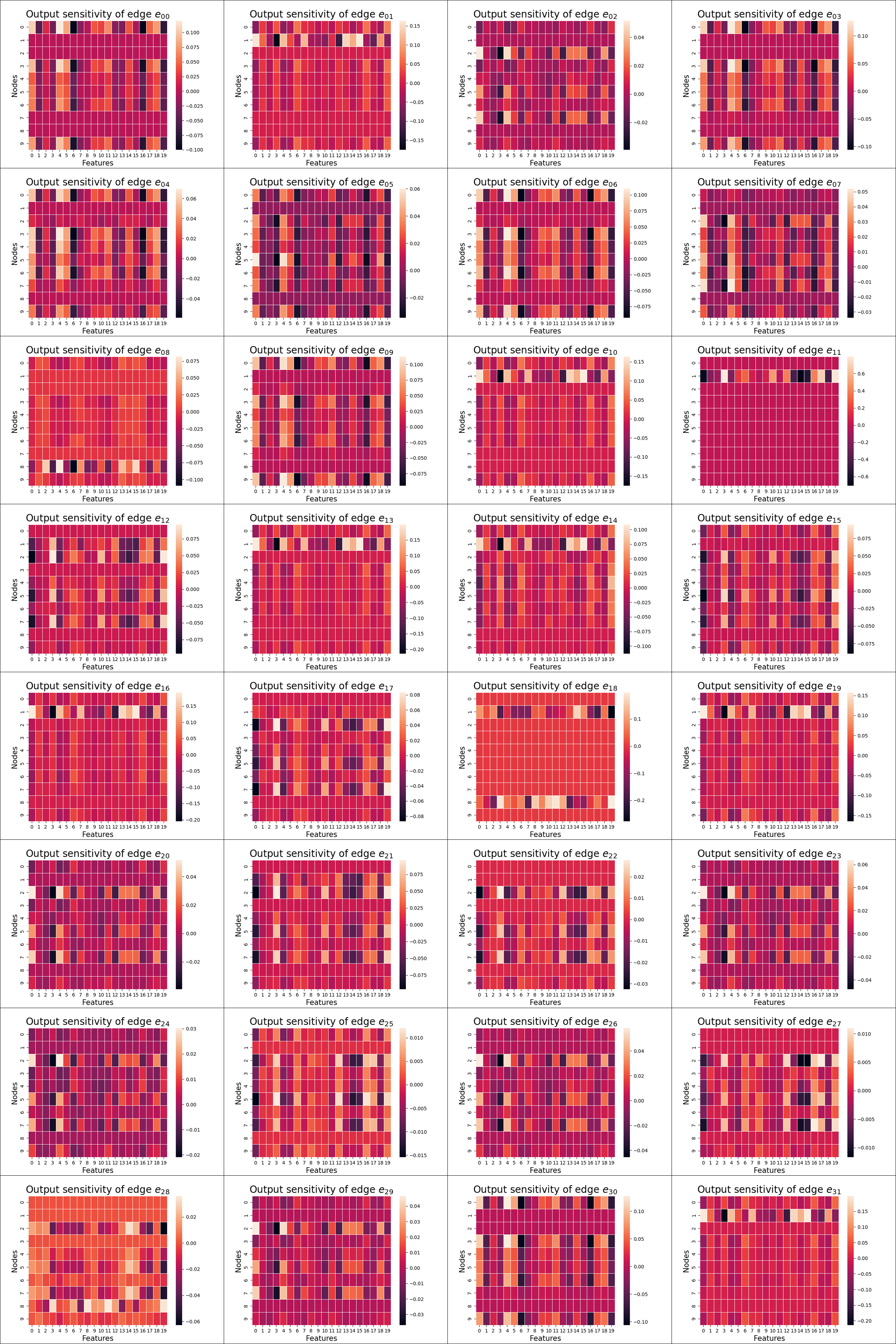}
        \caption{Heat maps of output sensitivity of the DDI network (1)}
        \label{fig:H1}
\end{figure}

\begin{figure}[!t]
    \includegraphics[width=1.01\linewidth]{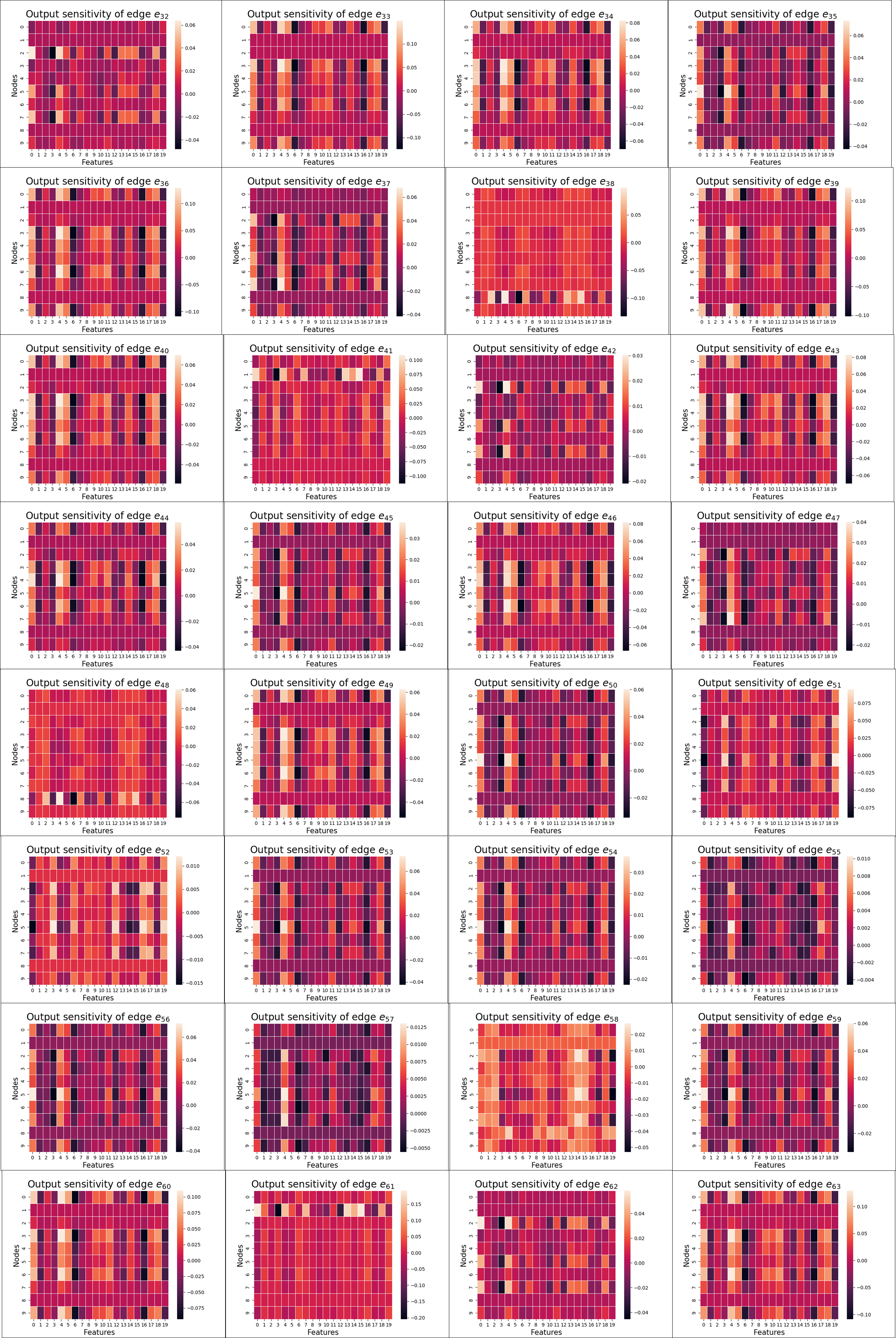}
        \caption{Heat maps of output sensitivity of the DDI network (2)}
        \label{fig:H2}
\end{figure}

\begin{figure}[!t]
    \includegraphics[width=0.91\linewidth]{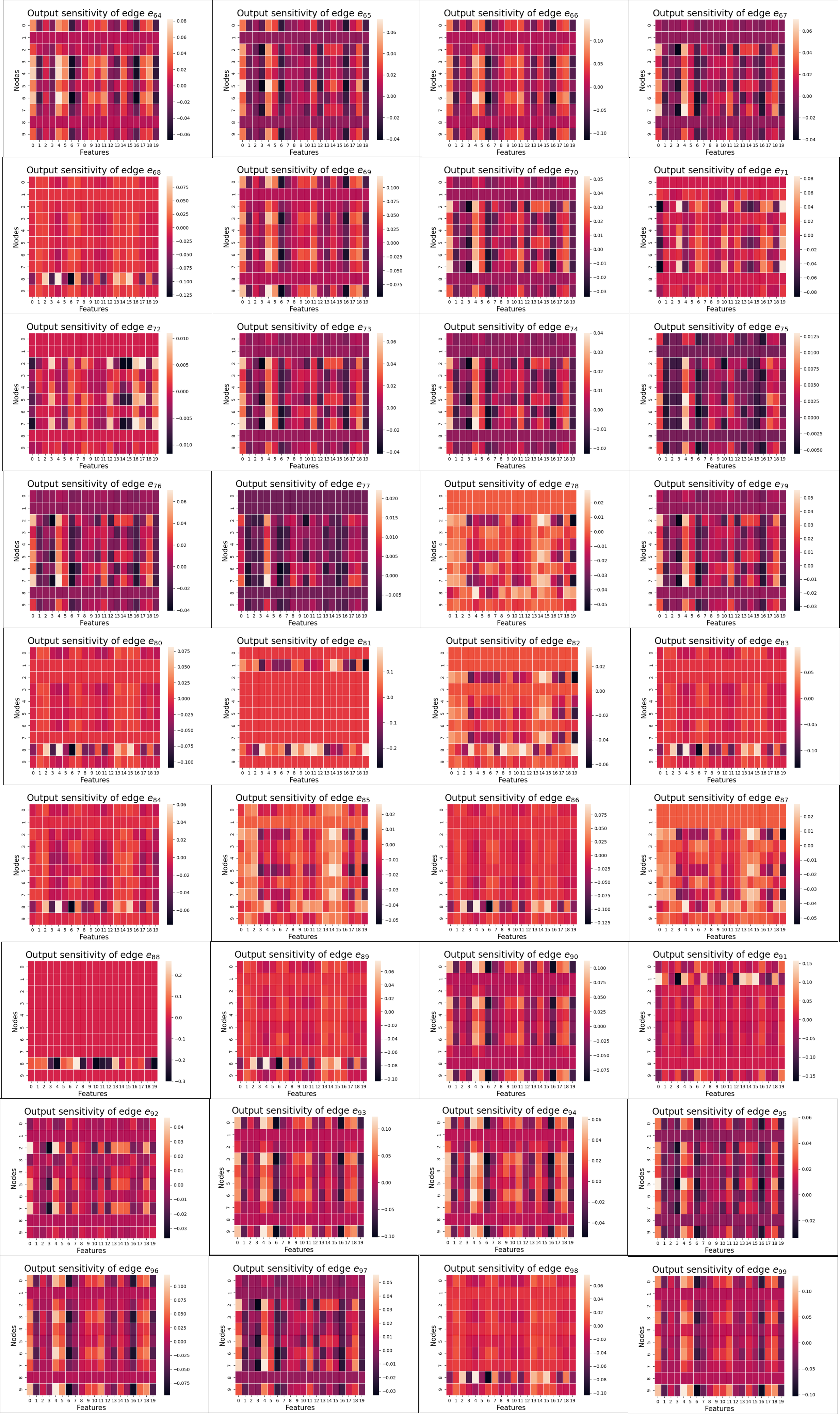}
        \caption{Heat maps of output sensitivity of the DDI network (3)}
        \label{fig:H3}
\end{figure}

\end{document}